\newcounter{nvcptr}
\newenvironment{proposition}{\normalsize{\textbf{Proposition \thenvcptr.}} \stepcounter{nvcptr}} 
\newcommand{\GG}{\mathcal{G}}
\newcommand{\SSS}{\mathcal{S}}
\title{Evaluation Framework of Superpixel Methods with a Global Regularity Measure}
\author[a,b,c,d,*]{Rémi Giraud}
\author[a,b,e]{Vinh-Thong Ta}
\author[c,d]{Nicolas Papadakis}
\affil[a]{University of Bordeaux, LaBRI, UMR 5800, PICTURA, F-33400 Talence, France.}
\affil[b]{CNRS, LaBRI, UMR 5800, PICTURA, F-33400 Talence, France.}
\affil[c]{University of Bordeaux, IMB, UMR 5251, F-33400 Talence, France.}
\affil[d]{CNRS, IMB, UMR 5251, F-33400 Talence, France.}
\affil[e]{Bordeaux INP, LaBRI, UMR 5800, PICTURA, F-33400 Talence, France.}
\begin{document} 
\maketitle

\begin{abstract}

In the superpixel literature, the comparison of state-of-the-art methods
can be biased by the non-robustness of some metrics
to decomposition aspects, such as the superpixel scale.
Moreover, most recent decomposition methods
allow to set a shape regularity parameter,
which can have a substantial impact on the measured performances.
In this paper,
we introduce an evaluation framework,
that aims to unify the comparison process of superpixel methods.
We investigate the limitations of existing metrics, 
and propose to evaluate each of the three core decomposition aspects: 
color homogeneity, respect of image objects and shape regularity.
To measure the regularity aspect, we propose a
new global regularity measure (GR), 
which addresses the non-robustness of state-of-the-art metrics.
We evaluate recent superpixel methods with these criteria, 
at several superpixel scales and regularity levels.
The proposed framework 
reduces the bias in the comparison process of state-of-the-art superpixel methods.
Finally, we demonstrate that the proposed GR measure is correlated with the performances
of various applications.
\end{abstract}

\keywords{Superpixels, Evaluation Framework, Superpixel Metrics, Regularity}

{\noindent \footnotesize\textbf{*}Rémi Giraud,  \linkable{remi.giraud@labri.fr} }

\begin{spacing}{2}   

\section{Introduction\label{sec:intro}}

Superpixel decomposition methods, that group pixels into homogeneous regions, 
have become popular with Ref. \citenum{ren2003},
and have been widely proposed in the past years 
\cite{felzenszwalb2004,vedaldi2008,moore2008,levinshtein2009,veksler2010,liu2011,achanta2012,vandenbergh2012,
conrad2013,buyssens2014,machairas2015,li2015,yao2015,rubio2016,giraud2017_scalp}.
Such decompositions provide a low-level image representation,
while trying to respect the image contours.
The superpixels are usually used as a pre-processing in 
many computer vision methods such as:
  object localization \cite{fulkerson2009},
stereo and occlusion processing \cite{zhang2011stereo},
   contour detection and segmentation \cite{arbelaez2011},  
   multi-class object segmentation
\cite{gould2008,yang2010,tighe2010,gould2014,giraud2017spm},
data associations across views \cite{sawhney2014},
   face labeling \cite{kae2013}, 
   or adapted neural networks architectures \cite{liu2015deep,gadde2016superpixel}.
For most methods, the computational cost depends on the number of elements to process,
and the superpixel representation is well adapted since this aspect is drastically reduced.
Moreover, superpixels can also directly improve the accuracy of applications such as labeling  \cite{arbelaez2011}, 
since they gather homogeneous pixels in larger areas, thus reducing the potential noise
of independent pixel labeling. 
Finally, contrary to the classical multi-resolution approach, that decreases the image size by averaging the information,
the superpixels preserve the image geometry and content.

The definition of an optimal superpixel decomposition depends on the application.
Nevertheless,
according to the literature,
the desired properties of a superpixel method should be the following.
(i) The color clustering must group pixels into homogeneous areas in terms of color features, for instance
in RGB or CIELab space.
(ii) The decomposition boundaries should adhere to the image contours, \emph{i.e.}, the
superpixels should not overlap with several image objects.
(iii) The superpixels should have regular shapes and consistent sizes within the decomposition.
Regular superpixels provide
an easier analysis of the image content, 
and reach better performances for applications such as tracking \cite{chang2013,reso2013}.
Since these properties cannot be optimal at the same time \cite{neubert2014compact}, 
most decomposition methods compute a trade-off between these aspects in their model.
For instance, Figure \ref{fig:trade_off} shows synthetic examples (a) and (d)
where a trade-off between the considered aspects  must be computed 
to decompose the images into three superpixels.
In Figures \ref{fig:trade_off}(b) and (e), the color homogeneity (i) is optimal,
while the respect of image contours (ii) and shape regularity (iii) are respectively favored in (c) and (f).
Such examples enable to illustrate how a superpixel decomposition method
should behave to optimize each criteria.

          \newcommand{\bytroish}{0.11\textwidth}
          \newcommand{\bytroisw}{0.31\textwidth}
  \begin{figure}[t!]
\begin{center}
\includegraphics[width=0.96\textwidth]{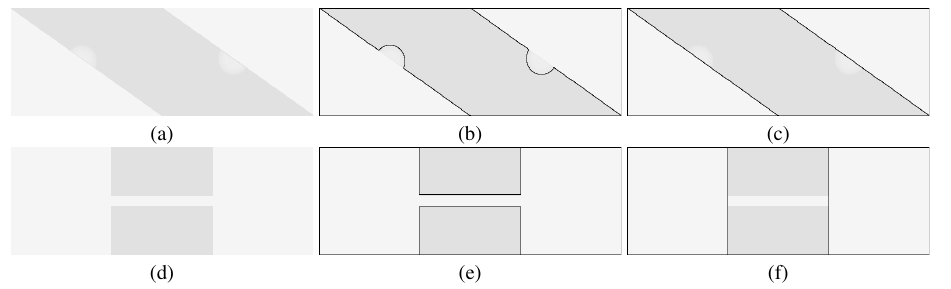}
  \end{center}
  \caption{
Examples of trade-off between the superpixel properties
for the decomposition of images (a) and (d) into three superpixels.
The decomposition in (b) and (e), are optimal in terms of color homogeneity
while the ones in (c) and (f) are respectively optimal in terms of
respect of image contours and regularity.
  } 
  \label{fig:trade_off}
  \end{figure}

Many evaluation metrics dedicated to the superpixel context were proposed to evaluate the consistency of a decomposition
with respect to the three above-mentioned superpixel properties \cite{levinshtein2009,liu2011,achanta2012,schick2012,neubert2012,wang2017superpixel}.
The metrics namely include 
intra-cluster variation (ICV) \cite{benesova2014} or explained variation (EV) \cite{moore2008}
for color homogeneity (i),
the undersegmentation error (UE) \cite{levinshtein2009,achanta2012,neubert2012}, 
achievable segmentation accuracy (ASA) \cite{liu2011}, 
or boundary recall (BR) \cite{martin2004} for the criteria (ii), on the respect of image objects,
 and circularity (C) \cite{schick2012}  or mismatch factor \cite{machairas2015}
for the criteria (iii), of superpixel shape regularity.
These metrics offer different interpretations and evaluations of the superpixel properties.
{\color{black}For instance, the circularity  \cite{schick2012} and the local regularity metric introduced in Ref. \citenum{wang2017superpixel}
respectively evaluate circular and square shapes
as the most regular ones, which can be arguable.}

Many recent methods allow the user to set a  
parameter that enforces or relaxes the regularity constraint of the superpixel shape, \emph{e.g.}, Ref.
\citenum{achanta2012,buyssens2014,machairas2015,li2015,yao2015,giraud2017_scalp}.
We show in Figure \ref{fig:slic_ex} that a same superpixel method
can produce different results, according to this parameter.
While regular decompositions provide visually consistent superpixels, with approximately the 
same shape and size, the irregular one
more accurately follows the color variations, at the expense of dissimilar superpixel shapes.

\begin{figure}[h]
 \begin{center}
\includegraphics[width=0.84\textwidth]{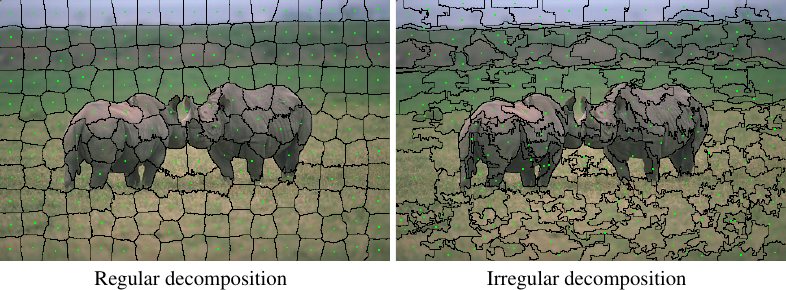}
\end{center}
\caption{
Example of decompositions
obtained with the same method \cite{achanta2012}
with different regularity settings.
The difference between the decompositions can be expressed with
the variance of distances between superpixel barycenters (normalized by the average distance),
which is $0.632$ for the irregular decomposition and 
$0.406$ for the regular one.
} 
\label{fig:slic_ex}
\end{figure}

Since the regularity can be set in most recent methods \cite{vandenbergh2012,achanta2012,buyssens2014,machairas2015,li2015,yao2015,giraud2017_scalp}, 
this parameter
should be carefully tunned during the evaluation of performances.
In each superpixel paper that introduces a decomposition method with such parameter, the authors give a default regularity 
setting that generally provides a trade-off between segmentation accuracy and superpixel shape regularity.
However, when comparing the state-of-the-art methods, 
even if the results are computed with the default settings of the initial paper,
there can be an important bias in the comparison,
since most evaluation metrics or superpixel-based pipelines achieve very different performances depending on the regularity of the decompositions.

In the recent study of Ref. \citenum{stutz2016}, no mention is made on the used settings, 
although the ranking between methods could be altered with different regularity parameters,
while Ref. \citenum{strassburg2015influence} shows that the regularity can be in line with the performances of segmentation and labeling applications.
Therefore, we recommend to evaluate the performances with accurate metrics,
and for several superpixel scales and regularity levels.
Hence, the performances of superpixel methods are not biased by the chosen 
regularity setting.

\subsection{Contributions}
In this paper, 
we propose an evaluation framework of superpixel methods.
This work aims to unify the evaluation process across superpixel works and
to enable a clear assessment of their performances. 
\begin{itemize}
\item We take a global view of existing superpixel metrics
and investigate their limitations. 
We show that the evaluation can be reduced
to one criteria per decomposition property:
color homogeneity, respect of image objects and shape regularity.
\item We address the non-robustness of the state-of-the-art regularity metrics,
with the proposed global regularity (GR) measure, that
relevantly evaluates the shape regularity and consistency of the superpixels.
\footnote{An implementation of the proposed GR metric is available at:
\url{www.labri.fr/~rgiraud/downloads}}
\item We report an evaluation of the state-of-the-art methods on this GR criteria.
Contrary to the standard superpixel literature, to reduce the bias in the evaluation process,
we recommend to perform the evaluation for several regularity levels, evaluated with GR,
since it expresses the range of potential results of each superpixel method.
\item Finally,
we demonstrate on various applications 
that the regularity constraint has a substantial impact on performances,
and that our GR metric is correlated with the obtained results.
\end{itemize}

\subsection{Outline}
In this paper, we first present in Section \ref{sec:metrics} 
the existing superpixel metrics, and investigate their limitations.
To address the non-robustness of the regularity metrics, we
propose in Section \ref{sec:gr}
a new global regularity measure.
In Section \ref{sec:soa}, we use the considered metrics to
compare the state-of-the-art superpixel methods 
according to several superpixel scales and regularity levels.
Finally, we demonstrate the impact of the regularity setting
on several applications in Section \ref{sec:impact_regu}.

\section{Standard Superpixel Metrics and Limitations\label{sec:metrics}}

In this section, we present 
the existing superpixel metrics,
that were progressively introduced in the literature
to evaluate the color homogeneity, the respect of image objects or the shape regularity,
and we show their limitations 
to only focus on relevant ones.
In Sections  \ref{sec:color} and \ref{sec:regu},
we present color homogeneity and shape regularity  metrics that evaluate
for an image $I$, 
a superpixel decomposition 
$\SSS=\{S_k\}_{k\in \{1,\dots,|\SSS|\}}$
composed of $|\SSS|$ superpixels $S_k$,
where $|.|$ denotes the cardinality of the considered element.
In Section \ref{sec:rio}, 
we present metrics that evaluate the respect of image objects and
compare the decomposition to a ground truth denoted 
$\GG=\{G_j\}_{j\in \{1,\dots,|\GG|\}}$, with $G_j$ a segmented region.

\subsection{Homogeneity of Color Clustering\label{sec:color}}

The homogeneity of the color clustering is a core aspect of the superpixel decomposition.
Most methods
compute a trade-off between spatial and color distances to compute the superpixels.
The ability to gather homogeneous pixels should hence be considered in the comparison process,
but this aspect is rarely evaluated in state-of-the-art superpixel works.
In Section \ref{sec:impact_regu}, we show that color homogeneity is
particularly interesting for image compression \cite{levinshtein2009,wang2013}.

The homogeneity of colors can be evaluated by comparing the average colors of superpixels 
to the colors of the pixels in the initial image.
The intra-cluster variation (ICV) \cite{benesova2014} has been proposed to measure 
such color variation for a decomposition $\SSS$:
\begin{align}
 \text{ICV}(\SSS)  &= \frac{1}{|\SSS|}\sum_{S_k}\frac{\sqrt{\sum_{p\in S_k}{(I(p)-\mu(S_k))^2}}} 
              {|S_k|} , \label{ev}
\end{align}
with $\mu(S_k)$, the average color of the superpixel $S_k$.

The explained variation (EV) \cite{moore2008}
was also proposed to evaluate
the homogeneity of the color clustering and is defined as:
\begin{align}
 \text{EV}(\SSS)   
 &= \frac{\sum_{S_k}{|S_k|\left(\mu(S_k) - \mu(I)\right)^2}} 
              {\sum_{p\in I}{\left(I(p)-\mu(I)\right)^2}} . \label{ev}
\end{align}

\subsubsection*{Limitations}

The ICV metric presents several drawbacks.
It is not normalized by the image variance to allow comparable evaluation
between different images \cite{stutz2016}, 
and the superpixel size is not considered, so the measure is not robust to the superpixel scale.
In Figure \ref{fig:icv}, we illustrate these issues on synthetic examples.
The dynamic and size of the same image with the same decomposition are altered, and
these transformations impact the measure of ICV.

  \begin{figure}[h!]
\begin{center}
\includegraphics[width=0.92\textwidth]{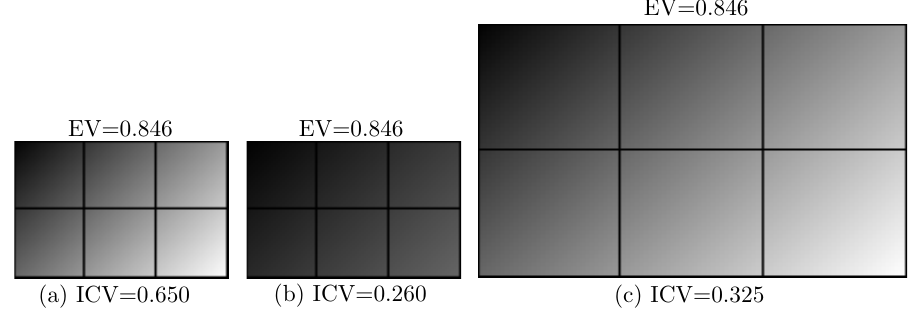}
  \end{center}
  \caption{
  Comparison of ICV and EV to measure the superpixel color homogeneity.
  The  dynamic and dimension of the same image (a) are respectively modified in (b) and (c).
  EV is robust to these transformations contrary to ICV.
  } 
  \label{fig:icv}
  \end{figure}

  The EV metric is robust to these transformations.
 It expresses the color variance contained 
into each superpixel, and simple calculations 
provide another explicit formulation:
\begin{align}
 \text{EV}(\SSS)  &= 1 - \sum_{S_k}\frac{|S_k|}{|I|}.\frac{\sigma(S_k)^2}{\sigma(I)^2} .
\end{align}

In Figure \ref{fig:ev}(c), we illustrate the color variance $\sigma(S_k)^2$ within each  
superpixel $S_k$.
Higher variance within a superpixel will lower the EV criteria, so high EV values
express homogeneous color clustering.
Despite the robustness of  this criteria, 
it was not considered in the main state-of-the-art superpixel works.
As in Ref. \citenum{stutz2016}, we recommend the use of EV to evaluate color homogeneity.

          \newcommand{\htabm}{0.22\textwidth}
        \newcommand{\htabmm}{0.24\textwidth}
        \newcommand{\wtabm}{0.30\textwidth}
        \newcommand{\wtabz}{0.28\textwidth}
        \newcommand{\htabz}{0.24\textwidth}

\begin{figure}[t!]
\begin{center}
{\footnotesize
 \begin{tabular}{@{\hspace{0mm}}c@{\hspace{1mm}}c@{\hspace{1mm}}c@{\hspace{0mm}}}
\includegraphics[width=0.96\textwidth]{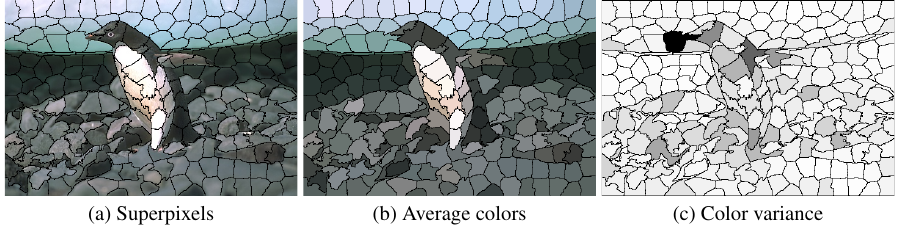}
\end{tabular}
  }
  \end{center}
  \caption{  
Example of superpixel decomposition (a), with average colors (b), and
color variance $\sigma (S_k)^2$ within each superpixel $S_k$ (c).
  }
  \label{fig:ev}
  \end{figure}

\subsection{Respect of Image Objects\label{sec:rio}}

The most considered aspect
in the superpixel evaluation is the respect of the image objects.
An accurate decomposition 
should have superpixels that do not overlap with multiple objects.
To evaluate this aspect, many metrics were proposed.
In the literature, undersegmentation error (UE),
achievable segmentation accuracy (ASA)
and boundary recall (BR) measures are mainly reported.
While UE and ASA compute overlaps with the ground truth regions, 
BR is a contour adherence measure, which is usually correlated to 
the first two metrics.

Regarding the UE, several definitions were proposed \cite{levinshtein2009,achanta2012,neubert2012}.
UE evaluates the number of pixels that cross ground truth region boundaries.
The initial UE formulation \cite{levinshtein2009}, denoted $\text{UE}_{\text{L}}$,
is defined as:
\begin{equation}
\text{UE}_\text{L}(\SSS,\GG) = \frac{1}{|\GG|}\sum_{G_j}\frac{\left(\sum_{S_k,S_k\cap G_j\neq\emptyset}|S_k|\right) - |G_j|}{{|G_j|}}.  \label{ue_l}
\end{equation}
The $\text{UE}_\text{L}$ measure was discussed in several works, \emph{e.g.}, Ref. \citenum{achanta2012,vandenbergh2012,neubert2012},
since 
any superpixel that has an overlap with the ground truth segment penalizes the metric.
Therefore, $\text{UE}_\text{L}$ is very sensitive to small overlaps and does not accurately reflect the 
respect of the image objects.
A method to reduce the overlap considered  in \eqref{ue_l} was proposed in Ref. \citenum{achanta2012}, but
it requires a parameter.
Recent state-of-the-art works tend to use the free parameter formulation of UE introduced by Ref. \citenum{neubert2012}:
\begin{equation}
\text{UE}(\SSS,\GG) = \frac{1}{|I|}\sum_{S_k}\sum_{G_j}{\min\{|S_k\cap G_j|,|S_k\backslash G_j|\}}.  \label{ue_np}
\end{equation}
This formulation respectively considers the intersection or the non-overlapping superpixel area
in case of small or large overlap with the ground truth region, 
and addresses the non-robustness of former UE definitions.

The ASA measure \cite{liu2011} also
aims at evaluating the overlap of superpixels with the ground truth.
It is reported in most of the superpixel literature as follows:
\begin{equation}
\text{ASA}(\SSS,\GG) = \frac{1}{|I|}\sum_{S_k}\underset{G_j}{\max}|S_k\cap G_j|.  \label{asa}
\end{equation}
For each superpixel $S_k$, 
the largest possible overlap with a ground truth region $G_j$ is considered,
and higher values of ASA indicate better results.

Another measure is extensively reported in the superpixel literature to evaluate 
the adherence to object contours: the boundary recall (BR).
This metric evaluates the detection of ground truth contours $\mathcal{B(\GG)}$ 
by the superpixel boundaries $\mathcal{B}(\SSS)$ such that:
\begin{equation}
\text{BR}(\SSS,\GG) = \frac{1}{|\mathcal{B}(\GG)|}\sum_{p\in\mathcal{B}(\GG)}\delta[\min_{q\in\mathcal{B}(\SSS)}\|p-q\|< \epsilon]  ,   \label{br}
\end{equation}
\noindent with 
$\delta[a]=1$ when $a$ is true and $0$ otherwise,
and $\epsilon$ is a distance threshold that has to be set, 
for instance to $2$ pixels \cite{liu2011,vandenbergh2012}. 
Each ground truth pixel is considered as detected, if
a superpixel boundary is at less than an $\epsilon$ distance. \\

  \subsubsection*{Limitations}
  Although the BR measure \eqref{br} has been extensively reported in the literature
and is recommended in Ref. \citenum{stutz2016},
it does not express the respect of image objects or the contour detection performances, 
as it only considers true positive contour detection.
Hence, the number of computed superpixel contours is not considered, and 
very irregular methods can obtain higher BR results.
Figure \ref{fig:br} compares two decompositions with the same number of superpixels 
that have maximal $\text{BR}=1$.
One of the decomposition is very irregular and produces a high number of boundary superpixels,
and this aspect is not considered in the BR metric.
For these reasons, recent works such as Ref. \citenum{machairas2015,zhang2016,giraud2017_scalp} 
report BR results according to the contour density (CD),
which measures the number of superpixel boundaries such that 
$\text{CD}(\SSS)=|\mathcal{B}(\SSS)|/|I|$.
Nevertheless, BR needs a parameter $\epsilon$ to be set and 
is not sufficient to reflect the respect of image objects.
BR should only be considered as a tool for evaluation of contour detection performances,
as shown in Section \ref{sec:impact_regu}. \medskip

\newcommand{\brcaw}{0.22\textwidth}
\newcommand{\brcah}{0.14\textwidth}
\begin{figure}[h!]
\begin{center}
{\scriptsize
 \begin{tabular}{@{\hspace{0mm}}c@{\hspace{1mm}}c@{\hspace{1mm}}c@{\hspace{1mm}}@{\hspace{0mm}}}
\includegraphics[width=0.96\textwidth]{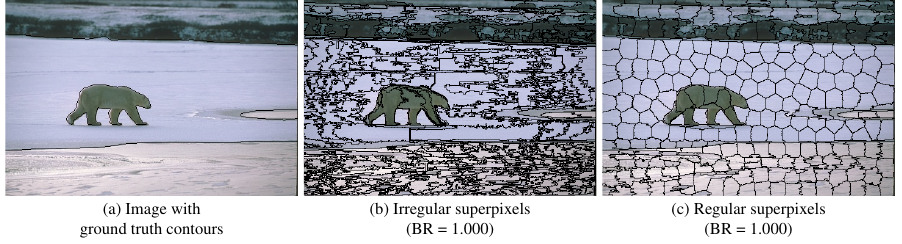}
\end{tabular}
  }
  \end{center}
  \caption{
  Examples of irregular \cite{yao2015}  (b) and regular \cite{achanta2012} (c) decomposition of an image (a),
  with maximal BR measure.
  BR evaluates the detection of ground truth contours so both decompositions 
  can have maximal BR measure,
  although the irregular one produces more superpixel boundaries.
  } 
  \label{fig:br}
  \end{figure}

The study of Ref. \citenum{stutz2016} claims that the 
UE \eqref{ue_np} and ASA \eqref{asa} measures are correlated, so that $\text{ASA}=1-\text{UE}$,
making redundant the use of both for superpixel evaluation.
Under relevant assumptions, we now show that the true relation between the two measures is:
\begin{equation}
 \text{ASA}(\SSS,\GG)=1-\text{UE}(\SSS,\GG)/2. \label{asa_ue}
\end{equation}
In Appendix \ref{sec:asa_demo}, we demonstrate that the relation
\eqref{asa_ue} is exact when all superpixels have a major overlap, \emph{i.e.}, 
a ground truth region intersects with more than half of the superpixel area.
This case is illustrated in Figure \ref{fig:cond_ue_asa}(b), where
a superpixel $S_k$ has a major overlap with a region $G_2$.
Note that this assumption is necessarily true when each superpixel overlaps with only two regions
or when the ground truth is binary.
In Appendix \ref{sec:asa_demo}, we measure the error of \eqref{asa_ue}
on state-of-the-art methods.
This error appears to be negligible, underlying the likelihood of this assumption.

  \begin{figure}[t!]
\begin{center}
\includegraphics[width=0.90\textwidth]{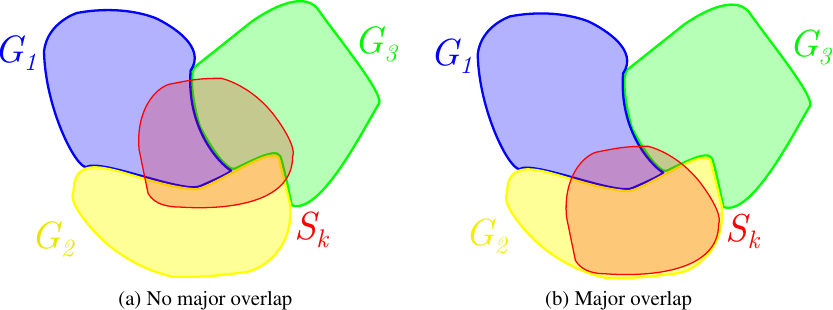}
\end{center}
\caption{
Examples of decomposition where a superpixel $S_k$ overlaps with multiple ground truth regions $G_j$.
In (b), contrary to (a), a region ($G_2$) overlaps with more than half of $S_k$,
which corresponds to the major overlap hypothesis ensuring \eqref{asa_ue}.
} 
\label{fig:cond_ue_asa}
\end{figure}

Hence, since the UE$_\text{L}$ measure has been proven to be non-robust,
and we demonstrate that the study of UE and ASA is equivalent,
we recommend to restrain the evaluation of the respect of image objects
to the ASA results \eqref{asa}. \\

\subsection{Regularity\label{sec:regu}}

Superpixel decompositions compute a lower-level image representation.
Since the desired number of elements is given, 
and most methods tend to compute regions with approximately the same size,
superpixels are generally created in smooth areas (see for instance Figure \ref{fig:slic_ex}).
To produce a consistent decomposition,
most superpixel methods compute a trade-off between color clustering and regularity,
which has been proven to
have an impact on application performances \cite{veksler2010,reso2013,strassburg2015influence}.
Therefore, the regularity of the superpixel shapes is a core aspect when evaluating
and comparing decomposition methods.

In Ref. \citenum{schick2012}, the circularity metric was introduced
to locally evaluate the compactness of the superpixels.
This measure is the usual local regularity metric, and has been considered in state-of-the-art works 
\cite{reso2013,buyssens2014,zhang2016,giraud2017_scalp}, and 
benchmarks \cite{schick2014,stutz2016,wang2017superpixel}.
This circularity C is defined 
 for a superpixel shape $S$ as follows:
\begin{equation}
\text{C}(S) = \frac{4\pi |S|}{|P(S)|^2} ,  \label{circu}
\end{equation}
where $P(S)$ is the superpixel perimeter.
The regularity is hence considered as the ability to produce circular areas.

{\color{black}
 While the circularity independently evaluates the local compactness of each superpixel, 
 other works propose to
 evaluate the consistency of the shapes within the decomposition.
 In Ref. \citenum{wang2017superpixel}, the variance of superpixel sizes is considered.}
 Ref. \citenum{machairas2015} goes further and proposes 
 an adapted version of the mismatch factor \cite{strachan1990} to measure the consistency in terms of size and shape.
  The mismatch factor is computed as $1-J(S_1,S_2)$ with $J$ being the standard Jaccard index \cite{jaccard1901},
  that computes the overlap between two regions $S_1$ and $S_2$ such that:
$J(S_1,S_2) = |S_1\cap S_2|/|S_1 \cup S_2|$.
 The global measure J for a superpixel decomposition $\SSS$ 
 is computed as follows \cite{machairas2015}:
     \begin{equation}
     \text{J}(\SSS) = \frac{1}{|\SSS|}\sum_{S_k} J(S_k^*,\hat{S}^*), \label{jaccard}
\end{equation}
  with $S_k^*$  the registered superpixel $S_k$, 
  so its barycenter is at the origin of the coordinates system, and
   $\hat{S}^*$, the binary average shape of $\SSS$.
In the following, we consider the J metric when comparing to Ref. \citenum{machairas2015},
since, contrary to the mismatch factor,
its interpretation is consistent with the other presented metrics, \emph{i.e.}, higher value is better.
Eq.  \eqref{jaccard} compares each superpixel
  to the binary average shape $\hat{S}^*$ of the decomposition using 
  $J$.
  To compute $\hat{S}^*$,
  the registered superpixels $S_k^*$ are first averaged into $S^*$ such that:
\begin{equation}
S^*=\frac{1}{|\SSS|}\sum_{S_k}S_k^* .
\end{equation}
  A thresholded shape $S_t^*$ is then defined by binarization 
  with respect to a threshold $t$.  
The binary average shape is finally defined as 
$\hat{S}^* = S_{\underset{t}{\text{argmax}}{(|S_t^*|\geq \mu)}}^*$,
with $\mu=|I|/|\SSS|$, 
the average superpixel size.
We illustrate these definitions in Figure \ref{fig:mf_def}.
We represent a superpixel decomposition $\SSS$,  the corresponding average of superpixel shapes $S^*$,
and the binary average shape $\hat{S}^*$.  \\

          \begin{figure}[t!]
\begin{center}
 \includegraphics[width=0.96\textwidth]{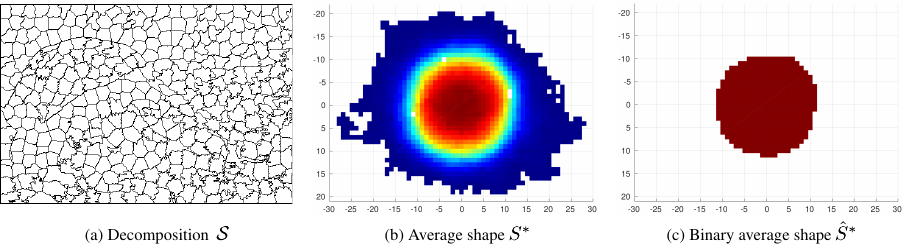}
 \end{center}
  \caption{
  Illustration of the average superpixel shape definition \cite{machairas2015}.
  A decomposition is considered in (a). The superpixel shapes are registered
  and averaged into $S^*$ (b) to provide the binary average shape (c).
  } 
  \label{fig:mf_def}
  \end{figure}

\subsubsection*{Limitations}

Although the circularity metric \eqref{circu}, introduced in Ref. \citenum{schick2012},
is the local regularity metric usually used in the literature 
\cite{reso2013,buyssens2014,zhang2016,giraud2017_scalp,schick2014,stutz2016}, 
it presents several drawbacks.
For instance, the relevance of this metric, and the regularity definition have been discussed \cite{machairas2015}, 
since the circularity considers a circle or a hexagon as more regular than a square,
and is very sensitive to boundary smoothness \cite{giraud2017_src}.
As a consequence, some methods, such as Ref. \citenum{zhang2016}, 
start from a hexagonal repartition of superpixels seeds, and design their
spatial constraint to fit to a hexagonal grid in order to obtain higher circularity.
However, the superpixel literature more generally refers to the regularity
as the ability to produce convex shapes with smooth boundaries.
{\color{black}Therefore, squares, circles or hexagons should be considered as regular shapes.}
Moreover, since most superpixel methods 
generate a regular square grid with their regularity parameter set to maximum value,
it would make sense to obtain the highest regularity measure for such square decomposition.

The mismatch factor or Jaccard index \eqref{jaccard} was introduced  by Ref. \citenum{machairas2015}
to evaluate the shape consistency within the whole decomposition, but it does not consider the superpixel size, 
so each shape equally contributes.
Moreover, by computing a thresholded average shape, the metric
appears to be non-robust to large shape outliers,
leading to potential irrelevant measures.
Finally, with such metric that only considers the shape consistency,
decomposing an image with its lines or with stretched rectangles would give the
highest regularity measure.

Contrary to ASA and EV, that give a relevant measure of aspects (i) color homogeneity and 
(ii) respect of image objects, no existing metric
provides a robust and accurate regularity measure of a superpixel decomposition.
As a consequence, we propose in the following Section \ref{sec:gr}, 
a new global regularity measure,
that addresses the limitations of state-of-the-art ones.

\section{A new Regularity Measure\label{sec:gr}}

In the literature, the circularity \eqref{circu} has been proposed
to measure the shape regularity \cite{schick2012},
and the mismatch factor to evaluate the shape consistency across the decomposition \cite{machairas2015}.
These measures present several drawbacks, that we address 
in this work by introducing 
two new metrics \eqref{src} and \eqref{smf},
combined into the proposed global regularity (GR) measure.
GR robustly evaluates both shape regularity and 
consistency over the decomposition, and
we demonstrate in Sections \ref{sec:soa} and \ref{sec:impact_regu} 
that it relevantly evaluates the performances of superpixel methods. \medskip

\subsection{Shape Regularity\label{sec:local_regu}}

As stated in Section \ref{sec:regu},
an accurate superpixel shape regularity measure
should provide the highest results for convex shapes,
such as squares, circles or hexagons,
and penalize unbalanced shapes while considering noisy boundaries.
To express such a measure,
we propose to combine all these aspects
into a new shape regularity criteria (SRC) \cite{giraud2017_src}.
The convexity of a shape $S$ is considered, \emph{i.e.},
the smoothness of its contours and the overlap 
with its convex hull $H_S$, that entirely contains $S$.
To evaluate both overlap with the convex hull and contour smoothness,
we first define as  $\text{CC}(S)=|P(S)|/|S|$, the ratio between the perimeter
and the area of a shape $S$, which is linked with the Cheeger constant for convex shapes \cite{caselles2009}.
Then, we introduce our criteria of regularity (CR) as:
\begin{equation}
\text{CR}(S)= \frac{\text{CC}(H_s)}{\text{CC}(S)}.
\end{equation}
Since the convex hull $H_s$ entirely contains $S$ and has a lower perimeter,
the CR measure is between 0 and 1,
and is maximal for convex shapes such as squares, circles or hexagons.
Nevertheless, the comparison to the convex hull is not sufficient to define the regularity of a superpixel.

The balanced repartition of the pixels within the shape is another aspect to consider.
Otherwise, shapes such as ellipses or lines would get the maximum CR results.
We define the variance term $\text{V}_{\text{xy}}$ as the ratio between  
the minimum and maximum variance of pixel positions $x$ and $y$ that belong to $S$:
\begin{equation}
\text{$\text{V}_{\text{xy}}$}(S) =  {\frac{\min(\sigma_x,\sigma_y)}{\max(\sigma_x,\sigma_y)}}   ,
  \end{equation}
\noindent with $\sigma_x$ and $\sigma_y$, the standard deviations of
the pixel positions.
Such measure enables to penalize dissimilarity in the pixel repartition, and
$\text{$\text{V}_{\text{xy}}$} = 1$ if, and only if, $\sigma_x=\sigma_y$, \emph{i.e.},
if the spatial pixel repartition is balanced.

The proposed shape regularity criteria (SRC)
is defined as follows:
\begin{equation}
\text{SRC}(\SSS) = \sum_{S_k}{\frac{|S_k|}{|I|}\text{CR}(S_k)}\text{V}_{\text{xy}}(S_k) . \label{src}
\end{equation}
Note that in practice, we use the square root of $\text{V}_{\text{xy}}$,
so both criteria have similar variation ranges.
SRC robustly and jointly evaluates
convexity, contour smoothness and balanced pixel repartition. \smallskip

\subsubsection*{Circularity vs Shape Regularity Criteria}
To demonstrate the robustness and relevance of SRC \eqref{src} over circularity  C \eqref{circu}, 
we consider in Figure \ref{fig:shape} synthetic shapes 
that are split into three groups (regular, standard and irregular), 
and generated with smooth (top) and noisy boundaries (bottom).
First, we present the circularity drawbacks,
which reports lower measure for the {Square} than for the {Hexagon}, 
or the {Ellipse}.
Since methods such as Ref. \citenum{machairas2015,zhang2016} produce superpixels from a hexagonal grid,
the regularity evaluation is very likely to be superior for these methods
than for other ones starting from square regular grids.
The circularity is also very sensitive to the contour smoothness, 
so regular and standard noisy shapes have similar measure, and 
the groups are no longer differentiated (see the bottom part of Figure \ref{fig:shape}).
Finally, standard but smooth shapes 
have much higher circularity than noisy regular ones.

With SRC, we first note that 
the three regular shapes have the highest regularity measure ($\approx1$),  
and that regular but noisy shapes,
have similar SRC to the smooth standard ones.
Overall, since SRC is less dependent on the boundary smoothness,
in each  group, smooth and noisy shapes are clearly separated, contrary to C.
Finally, we show the metric evolution with the shape size in Figure \ref{fig:c_vs_src}.
As stated in Ref. \citenum{roussillon2010}, due to discretization issues, 
the circularity must be thresholded so it is not superior to 1, and
it drops as the shape size increases.
Therefore, this metric is not robust to the superpixel size, 
and the comparison of methods on circularity is 
relevant only if the compared superpixel decompositions have the same number of elements.
Contrary to the circularity, the SRC metric
is robust to the superpixel scale and provides a consistent evaluation of shape regularity. \smallskip

\newcommand{\ec}{3mm}
\newcommand{\ecc}{2mm}
\newcommand{\shapew}{0.08\textwidth}

\begin{figure}
\begin{center}
  \includegraphics[width=.98\textwidth]{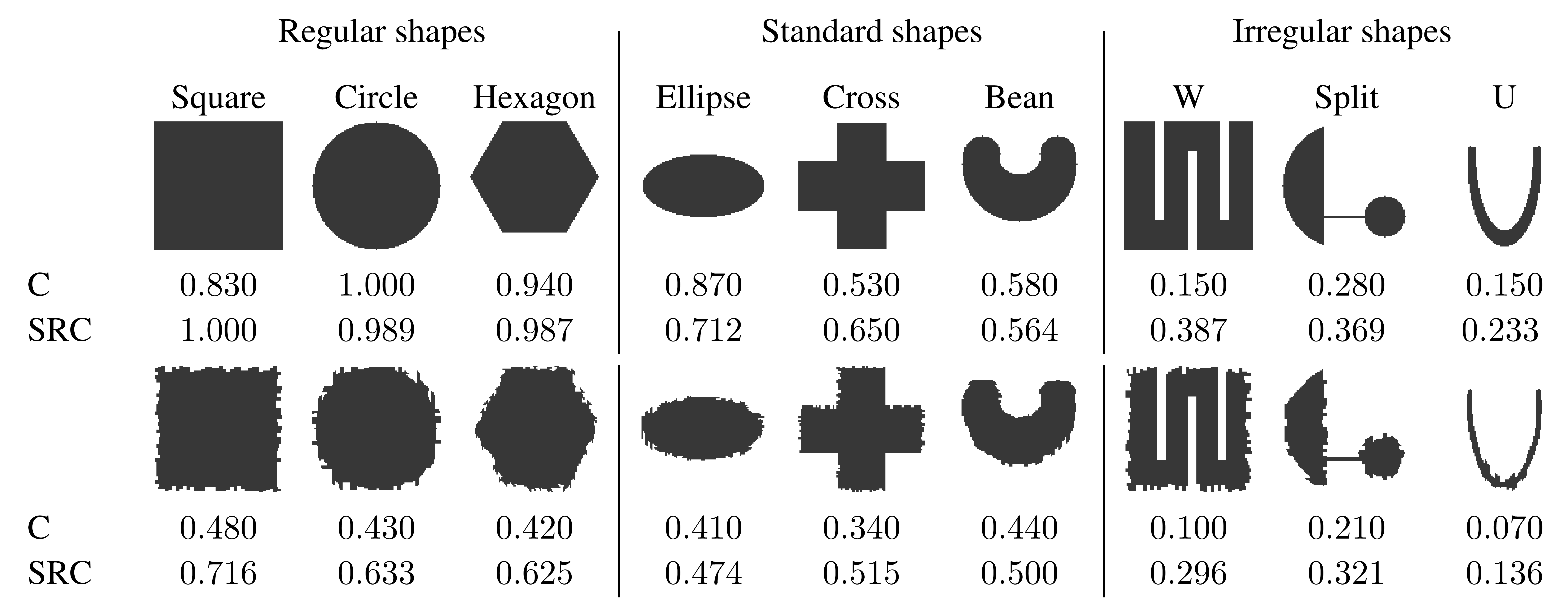}
\end{center}
\caption{
Comparison of circularity (C) and proposed shape regularity criteria (SRC)
on three groups of synthetic shapes with smooth (top) and noisy boundaries (bottom).
C appears to only favor circular shapes and
does not separate standard and regular noisy ones.
The SRC metric
addresses these issues and clearly differentiates the shape groups 
in the smooth and noisy cases.
} 
\label{fig:shape} 
\end{figure}

\begin{figure}[t!]
\begin{center}
{\footnotesize
\begin{tabular}{@{\hspace{0mm}}c@{\hspace{1mm}}c@{\hspace{0mm}}}
  \includegraphics[width=.61\textwidth,height=0.36\textwidth]{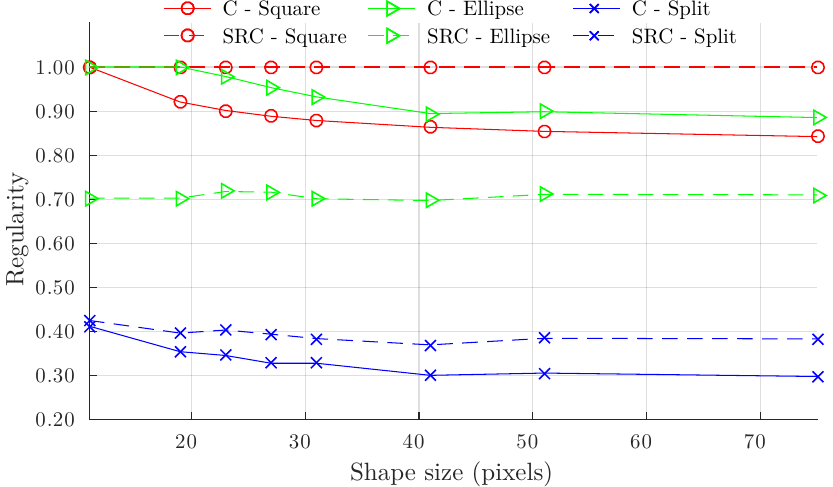}
\end{tabular}
 }
\end{center}
\caption{
Robustness to the superpixel scale of the proposed shape regularity criteria (SRC)
compared to the circularity (C).
} \vspace{-0cm}
\label{fig:c_vs_src}
\end{figure}

  Nevertheless, the SRC and circularity metrics
  independently evaluate each superpixel shape without considering 
  the global homogeneity of the decomposition.
  Superpixel methods, \emph{e.g.}, Ref. \citenum{felzenszwalb2004,rubio2016}, or other 
  segmentation algorithms such as the quadtree partitioning method \cite{tanimoto1975}
  can produce regions of very variable sizes.
  In Figure \ref{fig:quad}, 
  we show an example of decomposition with superpixels having approximately the same size \cite{achanta2012},
  and a standard quadtree-based partition \cite{tanimoto1975}, which produces larger regions in areas with lower color variance.
  Since the circularity and SRC measures
  independently consider each superpixel and report an average evaluation of local regularity, 
  the quadtree partition 
  obtains the highest measure although its elements do not have similar sizes.
  Such local metrics are thus not sufficient to express the global regularity of a decomposition.

\newcommand{\zzz}{0.23\textwidth}
\begin{figure}[h!]
\begin{center}
{\small
\begin{tabular}{@{\hspace{0mm}}c@{\hspace{0.5mm}}c@{\hspace{0mm}}}
  \includegraphics[width=.96\textwidth]{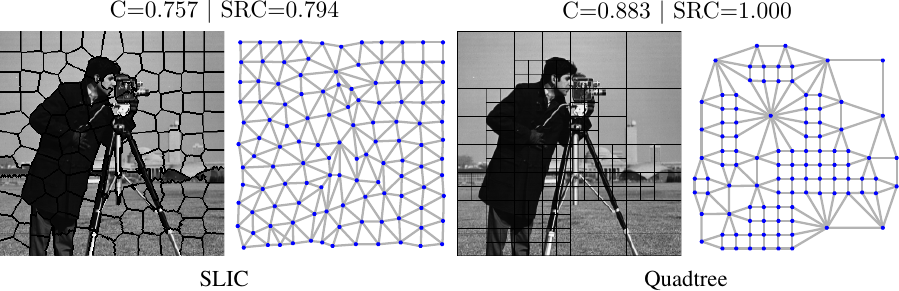}
\end{tabular}
  }
\end{center}
\caption{Limitation of the local shape regularity metrics.
The shape consistency is not considered so the quadtree decomposition 
gives higher measures than the decomposition obtained with the SLIC method \cite{achanta2012}.
The decompositions are represented with their Delaunay graphs, 
connecting the barycenters of adjacent superpixels.
} 
\label{fig:quad}
\end{figure}

\subsection{Shape Consistency\label{subsubsection:regu_smf}}

  The regularity of the superpixel shapes evaluated with a local criteria
  is a relevant information but does not reflect their
  consistency in terms of shape and size within the decomposition.
  In Section \ref{sec:local_regu}, we define the shape regularity properties for a superpixel:
  convexity, boundary smoothness and balanced pixel repartition. 
  Nevertheless, a perfectly regular decomposition of an image should be composed of similar regular shapes at the same scale.
  In other words, a relevant criteria should locally evaluate the
  shape regularity and how the superpixels are consistent in terms of shape and size within the decomposition.
As stated in Section \ref{sec:regu}, 
the mismatch factor \eqref{jaccard} uses the Jaccard index to compare the superpixels to a
thresholded average shape.
As illustrated in Figure \ref{fig:mf_def2},
this measure can be incoherent with the visual consistency of a decomposition.
In these examples, the binary average shape $\hat{S}^*$ is the same, and corresponds to the red areas. 
The J measure is low and incoherently decreases as the consistency is visually improved.

In this work, we propose the smooth matching factor (SMF),
that directly compares the superpixels to the
average shape $S^*$:
\begin{equation}
\text{SMF}(\SSS) = 1- \sum_{S_k}\frac{|S_k|}{|I|}.\left\|\frac{S^*}{|S^*|} - \frac{S_k^*}{|S_k^*|}\right\|_1/{2}. \label{smf}
\end{equation}
SMF compares the spatial distributions of the average superpixel shape $S^*$ 
to each registered superpixel shape $S_k^*$. 
The SMF criteria should be close to 1 if the distributions of pixels within the shapes are similar,
 and close to $0$ otherwise.
Overall, the proposed SMF metric evaluates the consistency in terms of shape and size within a decomposition. 
By considering the average shape $S^*$ without thresholding,
the evaluation is more robust to shape outliers.
This criteria
addresses the non-robustness of the mismatch factor (J) \cite{machairas2015},
as can be seen in Figure \ref{fig:mf_def2}, 
where SMF is relevant according to the consistency of the decomposition.
Finally, 
examples of superpixel decomposition on natural images are given in Figure \ref{fig:mf_ex} and
illustrate that SMF is not correlated to the J measure.

\smallskip

          \begin{figure}[t!]
\begin{center}
  \includegraphics[width=.96\textwidth]{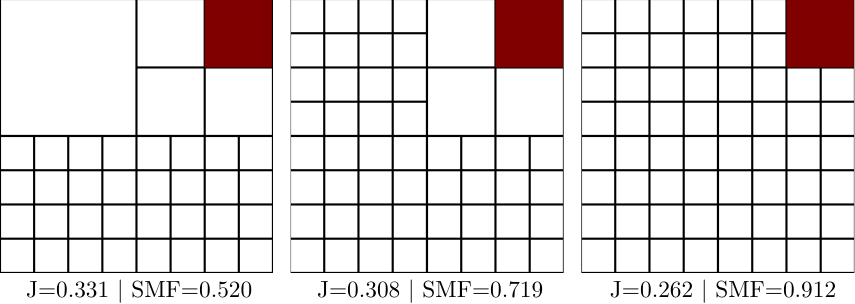}
\end{center}
  \caption{
  Illustration of several decomposition examples 
  with the corresponding average binary shape in red, and J \eqref{jaccard} and SMF \eqref{smf} values.
  } 
  \label{fig:mf_def2}
  \end{figure}

  \begin{figure}[t!]
\begin{center}
  \includegraphics[width=.96\textwidth]{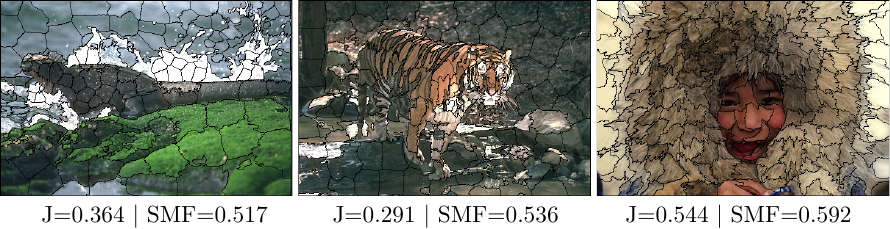}
\end{center}
  \caption{
 Decomposition examples with corresponding J  \eqref{jaccard} and SMF  \eqref{smf} measures.
  } \
  \label{fig:mf_ex}
  \end{figure}

\subsection{Global Regularity Measure\label{gr_def}}
As previously stated, 
  a perfectly regular superpixel decomposition should be composed of  
  compact shapes, that would be consistent in terms of size and shape.
  The SRC metric enables to locally evaluate the shape regularity while
  the SMF measures the shape  consistency.
To evaluate both aspects, we propose to combine these metrics in the
global regularity (GR) measure: 
\begin{equation}
\text{GR}(\SSS) = \text{SRC}(\SSS)\text{SMF}(\SSS) . \label{gr}
\end{equation}

 In the following Section \ref{sec:soa}, 
 we report the evaluation on the considered metrics (EV, ASA, GR) for state-of-the-art superpixel methods.
 These measures are reported according to the number of generated superpixels, and also 
 according to the regularity, since
 this setting substantially impacts performances.

\section{Comparison of State-of-the-Art Superpixel Methods\label{sec:soa}}

\subsection{Dataset}

We compare the performances of state-of-the-art methods  on 
the standard Berkeley segmentation dataset (BSD) \cite{martin2001}.
This dataset contains 200 various test images of size $321{\times}481$ pixels.
For each image, human segmentations are provided and considered as ground truth
to evaluate the respect of image objects.
At least five decompositions are provided per image, and the presented results
in the following sections are averaged on all ground truths.
Note that other datasets can be considered, \emph{e.g.}, Ref. \citenum{gould2009decomposing,yamaguchi2012parsing},
but the BSD \cite{martin2001} is the most used dataset for comparing superpixel methods.
Moreover, Ref. \citenum{stutz2016} shows that decomposition algorithms that perform well on the BSD
usually perform well on other datasets.

\subsection{Considered Superpixel Methods\label{sec:cons_methods}}

Many frameworks have been proposed to decompose an image into superpixels using 
either graph-based \cite{felzenszwalb2004,veksler2010,liu2011,buyssens2014}, watershed \cite{vincent91,neubert2014compact,machairas2015}, 
coarse-to-fine \cite{vandenbergh2012,yao2015} or gradient-ascent \cite{vedaldi2008,levinshtein2009,achanta2012,li2015,giraud2017_scalp} approaches 
(see Ref. \citenum{stutz2016} for a detailed review of existing methods).
In order to illustrate the proposed evaluation framework, 
and to validate our regularity measure,
we consider the following state-of-the-art superpixel methods:
TP \cite{levinshtein2009}, 	ERS \cite{liu2011}, 		SLIC \cite{achanta2012},  SEEDS \cite{vandenbergh2012}, 	
ERGC \cite{buyssens2014}, 	WP \cite{machairas2015}, 	LSC \cite{li2015}, 		ETPS \cite{yao2015}  and 	SCALP \cite{giraud2017_scalp}.
As stated in the introduction, 
only the most recent methods,
SLIC \cite{achanta2012}, 
ERGC \cite{buyssens2014}, 	WP \cite{machairas2015}, 	LSC \cite{li2015}, 		ETPS \cite{yao2015}  and 	SCALP \cite{giraud2017_scalp},
enable
to set a regularity parameter.
A decomposition example for each considered method is given in Figure \ref{fig:methods_ex}.

\newcommand{\hsp}{0.125\textwidth}
\newcommand{\wsp}{0.19\textwidth}
\begin{figure*}[t!]
\begin{center}
\includegraphics[width=.98\textwidth]{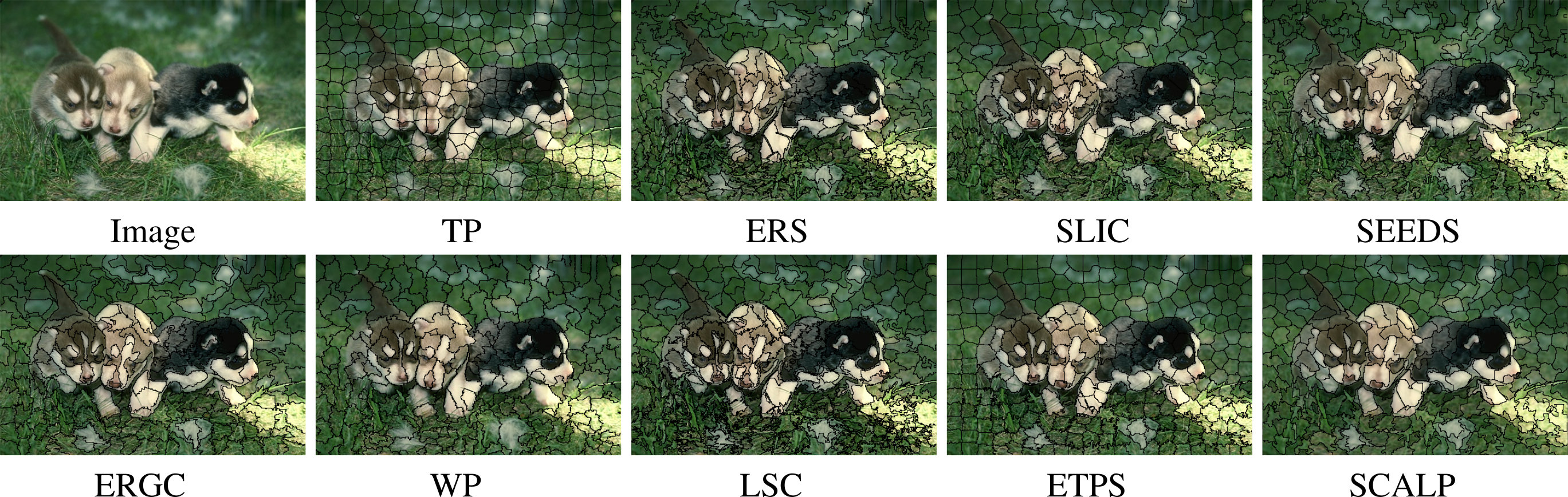}
\end{center}
\caption{
Decomposition example of each considered state-of-the-art superpixel methods for approximately $250$ superpixels.
} 
\label{fig:methods_ex}
\end{figure*}

\subsection{Quantitative Results}

\newcommand{\sstab}{4mm}

In this section, 
we compare the methods considered in Section \ref{sec:cons_methods}
on the recommended metrics for several superpixel scales and
several regularity levels.
In Figure \ref{fig:soa}, we perform the standard evaluation of performances,
according to the number of generated superpixels.
The behavior of each method regarding the different decomposition aspects, 
homogeneity of color clustering (EV), respect of image objects (ASA), and regularity (GR), 
is respectively evaluated in Figures \ref{fig:soa}(a), (b) and (c).
Methods such as TP \cite{levinshtein2009} and WP \cite{machairas2015},
that produce very regular superpixels appear to poorly perform on other metrics.
Although they report high regularity, recent methods SCALP \cite{giraud2017_scalp} and ETPS \cite{yao2015}
perform well on color homogeneity, evaluated with EV.
SCALP \cite{giraud2017_scalp} even performs best on respect of image objects, evaluated with ASA.
We summarize this evaluation
in Table \ref{table:soa}.
A superpixel decomposition cannot obtain maximum values on each criteria at the same time, and
according to the desired decomposition aspect, one would grant more consideration to a particular criteria.

\newcommand{\hspsoac}{0.20\textwidth}
\newcommand{\wspsoac}{0.32\textwidth}

\begin{figure*}[t!]
\begin{center}
 \includegraphics[width=.98\textwidth]{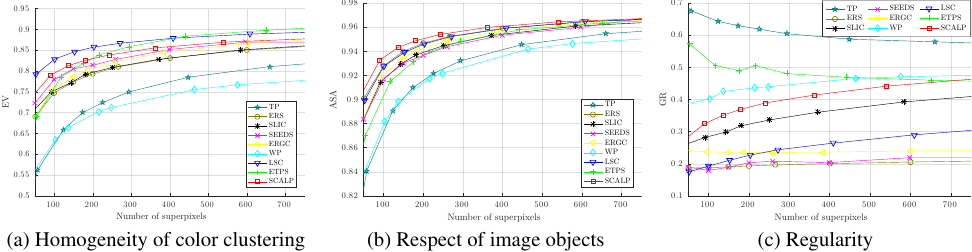}
\end{center}
\caption{
Evaluation of state-of-the-art superpixel methods on EV, ASA and GR according to the number of superpixels, with the 
methods default regularity settings.
The hierarchy between the performances of methods tends to be consistent on all metrics.
} 
\label{fig:soa}
\end{figure*}

\begin{table*}[ht!]
 \caption{
Average EV, ASA and GR on several scales $K=[25,1000]$, with the methods default regularity settings.
} 
 \label{table:soa}
\begin{center}
  {\footnotesize
\renewcommand{\arraystretch}{1.2}
 \begin{tabular}{@{\hspace{1mm}}c@{\hspace{0mm}}l@{\hspace{5mm}}c@{\hspace{\sstab}}c@{\hspace{\sstab}}c@{\hspace{\sstab}}c@{\hspace{\sstab}}c@{\hspace{\sstab}}c@{\hspace{\sstab}}c@{\hspace{\sstab}}c@{\hspace{\sstab}}c@{\hspace{1mm}}}
 \hline
& &TP \cite{levinshtein2009}& ERS \cite{liu2011}& SLIC \cite{achanta2012}&SEEDS \cite{vandenbergh2012}&ERGC \cite{buyssens2014}&WP \cite{machairas2015}&LSC \cite{li2015}&ETPS \cite{yao2015} &SCALP \cite{giraud2017_scalp}\\ \hline
  &EV \eqref{ev}&${0.734
}$&${0.807
}$ &${0.809
}$ &${0.829
}$ &${0.820
}$ &${0.713
}$ &${0.863
}$ &${0.845
}$ &${0.840
}$ \\
  &ASA \eqref{asa}& ${0.925
}$&${0.951
}$ &${0.944
}$ &${0.943
}$ &${0.947
}$ &${0.922
}$ &${0.950
}$ &${0.947
}$ &${0.956
}$ \\
     &GR \eqref{gr}&${0.610
}$&${0.197
}$ &${0.343
}$ &${0.203
}$ &${0.237
}$ &${0.444
}$ &${0.244
}$ &${0.487
}$ &${0.395
}$ \\
  \hline
 \end{tabular}
 }
 \end{center}
\end{table*}

Nevertheless, such evaluation is limited, since 
the hierarchy between methods performances tends to be consistent, \emph{i.e.},
there are no fluctuation in the ranking of methods over a certain number of generated superpixels.
Moreover, as stated in the introduction, 
the setting of regularity has a crucial impact on performances.
Therefore, the evaluation should be performed on this criteria,
to measure the ability of each method to produce the best trade-off between all evaluation aspects.
In Figure \ref{fig:soa_c},
we fix the superpixel scale to $K=250$, and we generate the superpixel results for several regularity levels.
The number of superpixels must be carefully set, since some methods may produce less superpixels that 
the required number.
Larger markers correspond to results obtained with default regularity settings.
Hence, for TP \cite{levinshtein2009}, ERS \cite{liu2011} and SEEDS \cite{vandenbergh2012} methods, 
only one value is reported since these methods do not enable to set the regularity of the decomposition.

\begin{figure}[h!]
\begin{center}
 \includegraphics[width=0.96\textwidth]{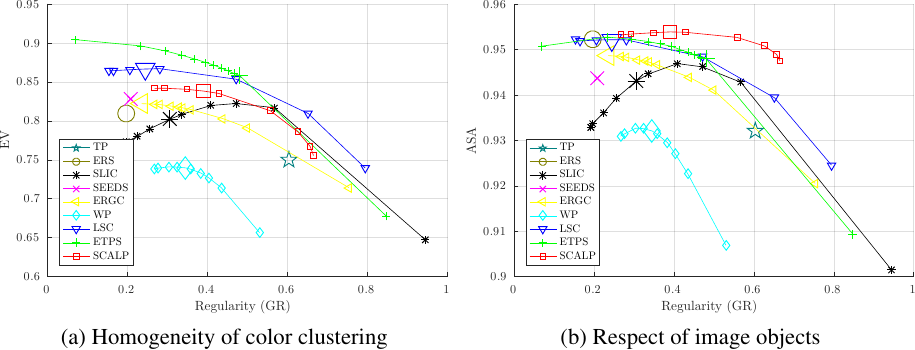}
\end{center}
\caption{
Evaluation of superpixel state-of-the-art superpixel methods on EV, ASA according to the regularity (GR) for $K=250$.
} 
\label{fig:soa_c}
\end{figure}

First, the results appear to be greatly impacted by the regularity setting, 
so for each method, a wide range of performances can be obtained.
Second, contrary to the evaluation according to the number of superpixels, 
the performance curves of methods cross each others, and the hierarchy between methods
can be different according to the regularity parameter.
Third, the default setting of most methods is close from a good trade-off 
between all decomposition aspects but is not necessarily optimal.
For instance, ETPS \cite{yao2015} default setting sets high regularity and 
is not optimal for EV and ASA.
The performances of SLIC \cite{achanta2012} can also be improved on all aspects 
for a higher regularity setting.
We can note that contrary to other methods, results on EV and ASA significantly drop with lower regularity for SLIC.
This can be explained by the post-processing step that 
enforces the pixel connectivity within superpixels \cite{achanta2012}. 
With relaxed regularity constraint, more pixels are disconnected and less accurately grouped during the post-processing.

This evaluation demonstrates that the regularity setting is a crucial parameter to set for all methods.
Hence, a huge bias can appear in the comparison process if this parameter is not optimal, 
or at least set according to the default setting reported by the authors.
This is particularly true since the gap of performances may be very small between methods.
By carefully comparing performances on the recommended metrics, and on
both produced superpixel number and regularity, 
we ensure accurate evaluation of superpixel methods. 
We further illustrate the impact of regularity in the following Section \ref{sec:impact_regu},
where we measure application performances
for several regularity levels.

\section{Impact of the Regularity Parameter on Application Performances\label{sec:impact_regu}}

In this section,
we consider several applications,
and demonstrate how the regularity can affect the performances.
For most applications and superpixel methods,
the performances appear to be highly correlated to the proposed GR metric.
For nearest neighbor matching or tracking, more regular decompositions enable to capture more similar patterns
between images,
while for image compression or contour detection applications,
lower regularity constraint may enable to reach higher performances.

\subsection{Nearest Neighbor Matching\label{sec:nearest}}

In this section, we investigate the impact of regularity on the superpixel matching accuracy between images.
The correlation between the superpixel shape and parsing performances,
\emph{i.e.}, segmentation and labeling, has been demonstrated in Ref. \citenum{strassburg2015influence},
where regular grid-based  methods appear to give higher pixel-wise results.
We propose to compute superpixel matching between regular and irregular decompositions
using the superpixel neighborhood structure introduced in Ref. \citenum{giraud2017spm}.
In Figure \ref{fig:nn_expe},
two decompositions are computed on the same
image using both regular
(Figures \ref{fig:nn_expe}(a) and (b))
and irregular
decompositions (Figures \ref{fig:nn_expe}(d) and (e)). 
Correspondences are then computed between the two regular decompositions 
and the two irregular ones, and we compare the respective displacement magnitude
of the matches in Figures \ref{fig:nn_expe}(c) and (f).
The correspondences are computed on color features (normalized color histograms), 
and we display the magnitude of the matches displacement with the standard optical 
flow representation (Figure \ref{fig:nn_expe}(g)).

\newcommand{\wtabss}{0.125\textwidth}
\newcommand{\wtabsss}{0.20\textwidth}
\begin{figure*}[t]
\begin{center}
 \includegraphics[width=0.98\textwidth]{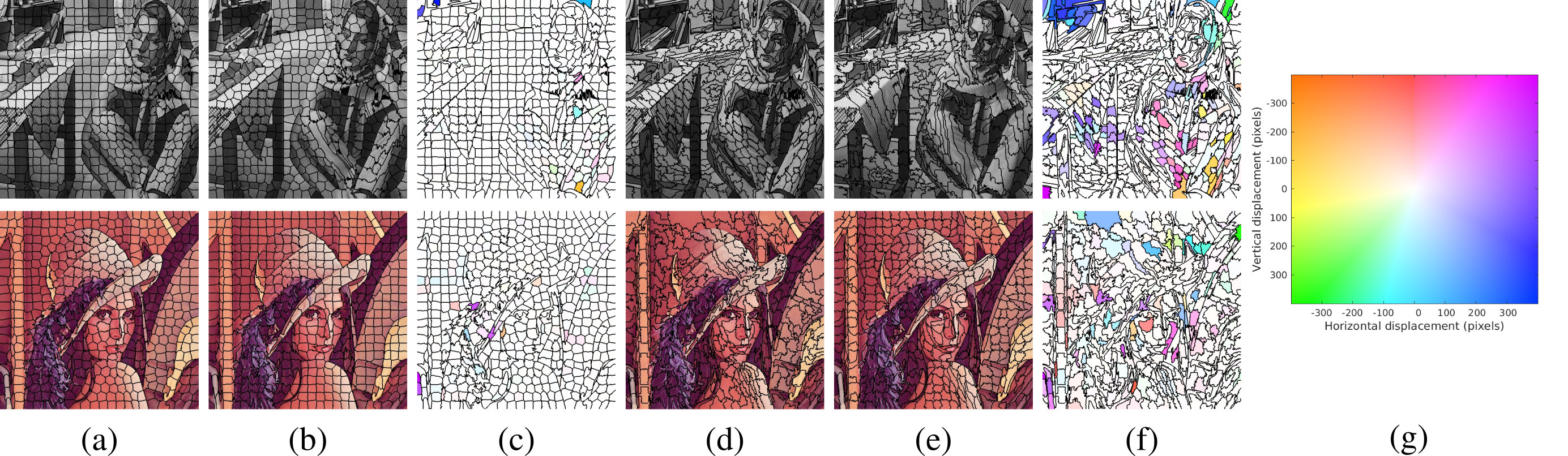}
\end{center}
\caption{
Examples of matching between regular (a), (b) and irregular (d), (e) decompositions. 
The magnitude of the displacement between the barycenters of the matched superpixels is respectively given
for regular and irregular matching in (c) and (f). 
The displacement is illustrated with the standard optical flow representation (g),
and respectively displayed within the superpixel boundaries of images (a) and (d).
On these examples, the average GR is respectively $0.478$ and $0.205$  
for the regular (a) and (b) and irregular (d) and (e) decompositions.
} 
\label{fig:nn_expe}
\end{figure*}

While the correspondences between regular decompositions are accurate (low displacement between matched superpixels),
irregular decompositions mislead the matching.
With irregular methods, the barycenters may fall outside the superpixels, 
making difficult the use of the geometrical information of the neighborhood.
Regular methods
produce decompositions where all
elements globally have the same size and 
the barycenters are usually contained into the superpixels.
Therefore, the spatial relation between a superpixel barycenter 
and the ones of its neighbors is more relevant,
which enables more accurate matching.

\subsection{Superpixel Tracking}

The computational gain obtained with superpixels is particularly interesting 
in real-time application such as video tracking \cite{wang2011,chang2013,reso2013}.
Two main approaches exist for superpixel-based video segmentation and tracking.
The first one computes 3D superpixels, \emph{i.e.}, {supervoxels}, with the same properties
than for 2D methods. 
Hence, it requires the whole sequence to produce a decomposition and cannot be directly computed 
with the acquisition of new frames.

The second approach computes superpixels on each frame
and the decomposition of the following ones are inferred from the previous segmentations \cite{chang2013}. 
In these methods, optical flow computation is generally used to 
transfer the superpixels barycenter from a frame to another one \cite{chang2013}.
As a consequence, a superpixel can only have a unique correspondence in the next frame,
and elements describing an object can then be tracked across the video sequence.

We investigate the impact of regularity on the tracking performances of such approach 
with the temporally consistent superpixels (TSP) method \cite{chang2013}.
We consider images from Ref. \citenum{tsai2012}, which is composed of short
video sequences provided with binary labeling ground truth, 
and we compare the tracking performances obtained with TSP \cite{chang2013}
generating regular and irregular decompositions.
In Table \ref{table:tracking}, we report the pixel-wise labeling accuracy over the whole sequences,
which expresses the ability of the method to track superpixels.
With more regular decompositions, 
the superpixel shapes and sizes are consistent over time, which
enhances the tracking accuracy.
We also report the average percentage of new superpixels created between two decompositions.
Hence, higher values for irregular decompositions indicate the loss of tracking during the process.
Examples of tracking for such regular and irregular decompositions are illustrated in Figure \ref{fig:tracking_ex} 
for the considered sequences.
The consistency of the regular decompositions enables to more efficiently track the superpixels,
\emph{i.e.}, to compute one-to-one correspondences
through the sequences.

\begin{table}[h!]

\caption{
Tracking accuracy with TSP \cite{chang2013}
on sequences from Ref. \citenum{tsai2012}.
The labeling accuracy and the percentage of lost superpixels over frames are reported for
regular and irregular settings.
}
\renewcommand{\arraystretch}{1.3}
\begin{center}
{\small
\begin{tabular}{@{\hspace{0mm}}l@{\hspace{4mm}}@{\hspace{4mm}}c@{\hspace{4mm}}c@{\hspace{8mm}}c@{\hspace{4mm}}c@{\hspace{4mm}}}
\cline{2-5}
&\multicolumn{2}{@{\hspace{0mm}}c@{\hspace{0mm}}}{{\hspace{-4mm}}Covering accuracy{\hspace{1mm}}}&\multicolumn{2}{c}{{\hspace{-3mm}}Lost tracking}  \\ \cline{2-5}
& {\hspace{4mm}} Regular & Irregular& Regular & Irregular\\ \hline
\multicolumn{1}{@{\hspace{2mm}}l@{\hspace{3mm}}}{birdfall2} 	&$98.3\%$	&$97.8\%$	&$1.0\%$	&$1.4\%$ \\ 
\multicolumn{1}{@{\hspace{2mm}}l@{\hspace{3mm}}}{girl}      	&$51.1\%$	&$50.4\%$	&$13.9\%$	&$24.8\%$\\
\multicolumn{1}{@{\hspace{2mm}}l@{\hspace{3mm}}}{parachute}	&$75.3\%$	&$73.9\%$	&$4.5\%$	&$5.1\%$\\
\multicolumn{1}{@{\hspace{2mm}}l@{\hspace{3mm}}}{penguin}	&$94.3\%$	&$85.0\%$	&$2.6\%$	&$8.8\%$\\ \hline
\multicolumn{1}{@{\hspace{2mm}}l@{\hspace{3mm}}}{{average}}	&$79.8\%$	&$76.7\%$	&$5.5\%$	&$10.0\%$\\ \hline
\end{tabular} 
}
\end{center}
\label{table:tracking}
\end{table}

\newcommand{\htabmh}{0.095\textwidth}
    \newcommand{\wtabb}{0.15\textwidth}
\begin{figure*}[ht!]
\begin{center}
 \includegraphics[width=0.96\textwidth]{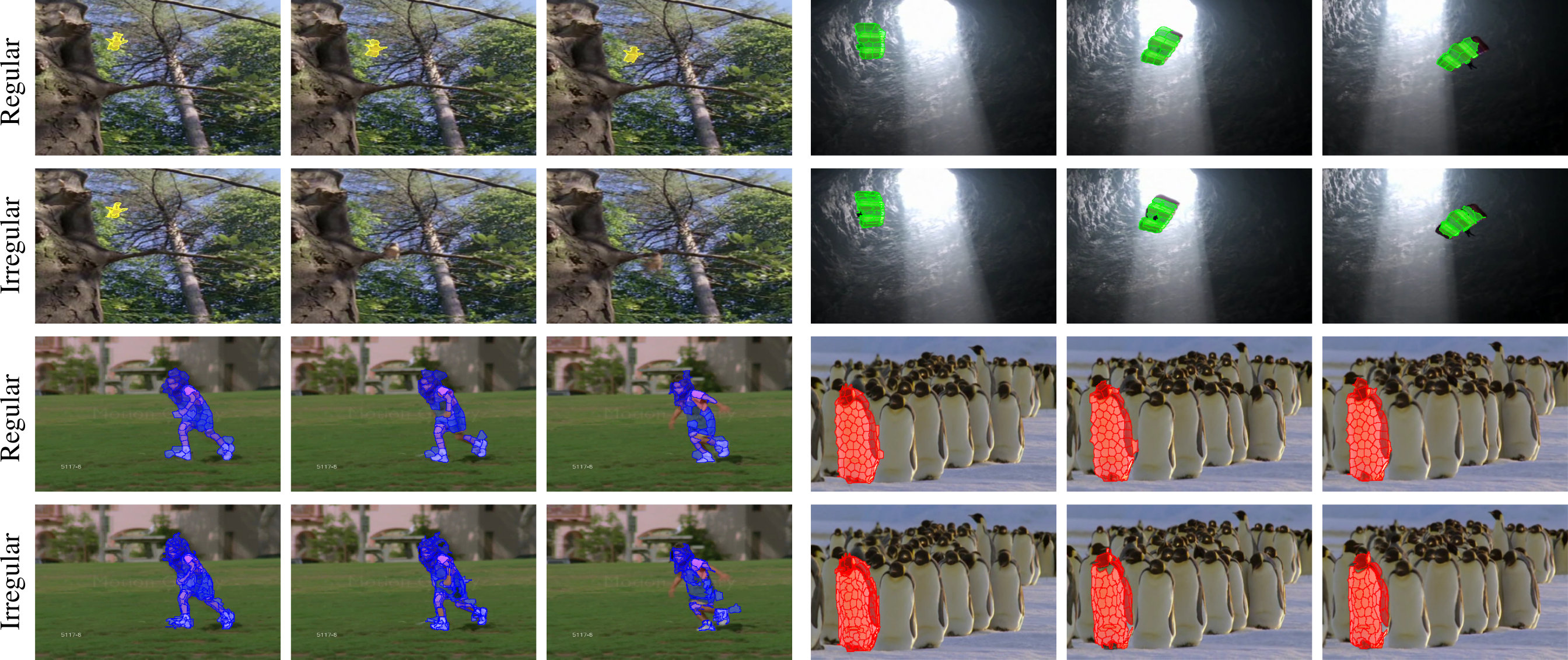}
\end{center}
\caption{
Examples on superpixel tracking performances of TSP \cite{chang2013} over the sequences of Ref. \citenum{tsai2012}
for regular and irregular decompositions.
On these examples, the average GR is respectively $0.308$ and $0.554$  for the irregular and regular decompositions.
} 
\label{fig:tracking_ex}
\end{figure*}

\subsection{Image Compression}

Since superpixels define a lower-level representation, 
they can be used to compress the image content \cite{levinshtein2009,wang2013}.
We propose to study the impact of regularity on the image compression efficiency.
Similarly to Ref. \citenum{wang2013},  each
superpixel color channel is approximated
by a polynomial of third
order. 
The color information is then contained into the polynomial coefficients
which are computed by a least mean square approach.

A compression example is illustrated in Figure \ref{fig:ic_ex}
for a regular and an irregular decomposition.
We show the reconstructed image $I_r$ from the polynomial coefficients stored for each superpixel.
The irregular decomposition produces a reconstructed image closer to the 
initial one $I$, while the boundaries of superpixels are visually more noticeable with the regular one.
To further evaluate the compression efficiency, we report in Figure \ref{fig:ic}
the mean square error (MSE) between the initial image
and the reconstructed one:
\begin{equation}
 \text{MSE}(I,I_r) = \frac{1}{|I|}\sum_{p\in I}\left(I(p)-I_r(p)\right)^2. \label{mse}
\end{equation}
The decompositions of state-of-the-art superpixel methods 
are computed at several regularity levels, for $K=250$ superpixels, 
and the MSE \eqref{mse} results are averaged on all BSD images.
For most methods, the default regularity parameter is not optimal to perform well on image compression.
These results are in line with the evaluation on the EV metric  in Figure \ref{fig:soa_c}(a), 
since homogeneous superpixels in terms of colors
enable to compute  polynomial coefficients
that more accurately fit to the initial image content.
However, such decompositions are obtained at the expense of lower regularity.

  \begin{figure}[t!]
\begin{center}
 \includegraphics[width=0.90\textwidth]{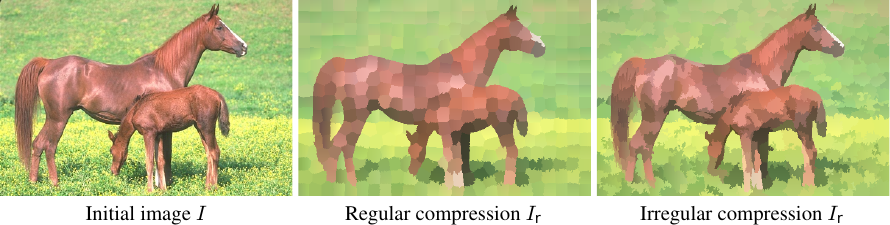}
\end{center}
  \caption{
  Example of image compression with regular and irregular decompositions for $K=250$ superpixels.
  The reconstructed colors are displayed within each superpixel.
  } 
  \label{fig:ic_ex}
  \end{figure}

    \begin{figure}[t!]
\begin{center}
{\footnotesize
 \begin{tabular}{@{\hspace{0mm}}c@{\hspace{0mm}}}
 \includegraphics[width=0.68\textwidth,height=0.42\textwidth]{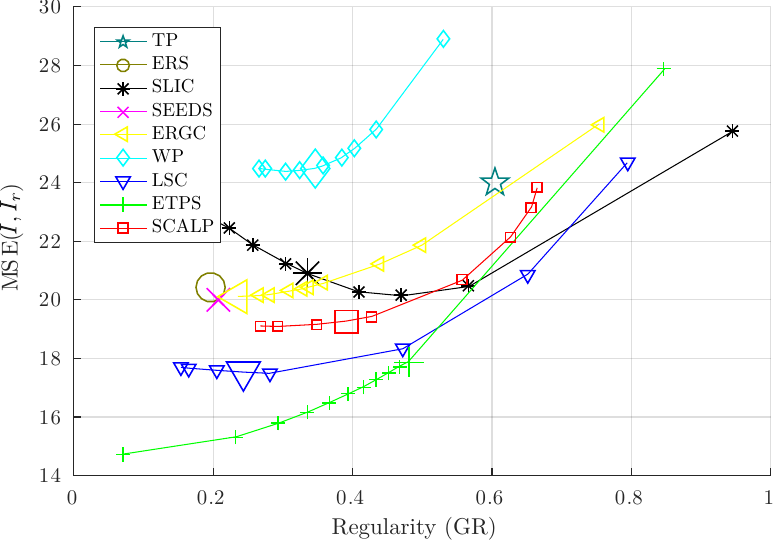}
  \end{tabular}
  }
  \end{center}
  \caption{
  Results of state-of-the-art superpixel methods on image compression.
The average MSE \eqref{mse} between initial and reconstructed BSD images are reported.
  } 
  \label{fig:ic}
  \end{figure}

\subsection{Contour Detection and Segmentation}

Finally, contour detection and segmentation performances 
are studied for several levels of shape constraint.
The ability of superpixel methods to provide accurate contour detection
was investigated, for instance in Ref. \citenum{arbelaez2011,vandenbergh2012,giraud2017_scalp}.
A superpixel decomposition can naturally produce more regions than the number of image objects,
and the number of superpixel boundaries is dependent on the superpixel scale.
Nevertheless, accurate contour map can be easily obtained
by averaging the boundaries of superpixel decompositions obtained at multiple scales \cite{vandenbergh2012,giraud2017_scalp}.
Hence, the average map has high values at contour pixels that are detected at several scales
and is robust to texture areas and high local gradients.

As stated in Section \ref{sec:rio},
the boundary recall (BR) measure is not sufficient to express the contour detection performance, and should be 
compared to the precision (P) measure, which evaluates the percentage of accurate detection
among the computed superpixel contour.
Such contour detection accuracy can be synthesized by the F-measure defined as:
 \begin{equation}
 \text{F}=\frac{2.\text{P.BR}}{\text{P}+\text{BR}} . \label{fmeasure}
 \end{equation}
In the following, superpixel boundaries are computed at several scales, from $K=[25,1000]$, 
and averaged to compute the contour map.
In Figure  \ref{fig:seg_res}(d),
we report
the precision-boundary recall curves \cite{martin2004} corresponding to the 
contour detection obtained with the default regularity parameter of each method.
SCALP \cite{giraud2017_scalp} and SLIC \cite{achanta2012} appear to obtain the best contour detection performances.

  \newcommand{\hspsoa}{0.36\textwidth}
\newcommand{\wspsoa}{0.46\textwidth}
  
\begin{figure*}[t!]
\begin{center}
 \includegraphics[width=0.96\textwidth]{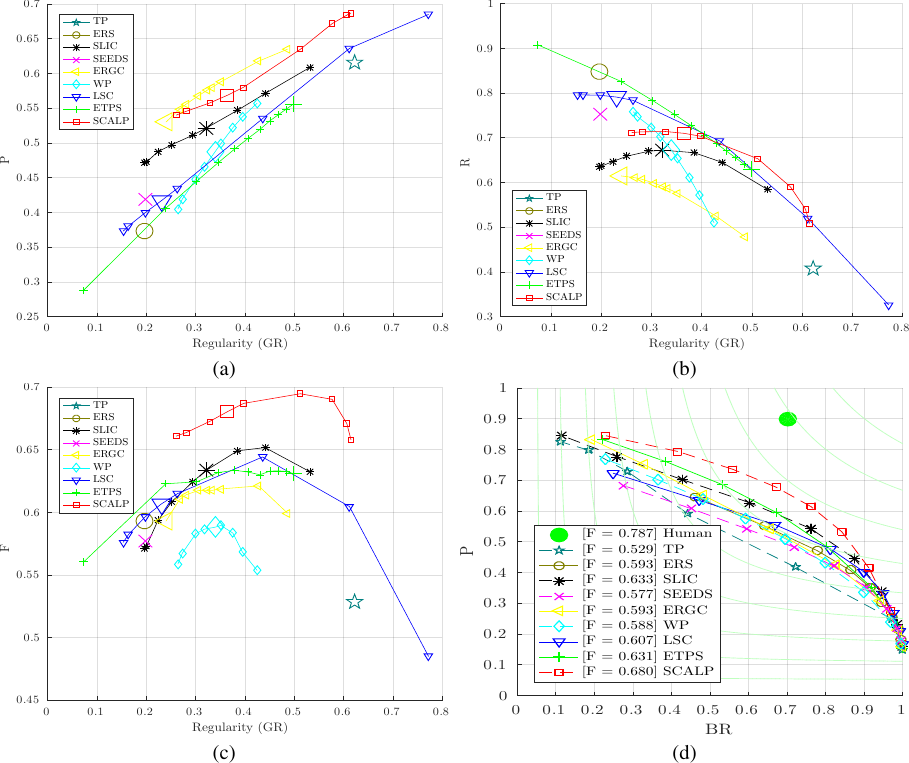}
\end{center}
\caption{
Contour detection performances according to the regularity with the precision (a), 
boundary recall (b), and maximum F-measure (c).
Precision-recall curves for default regularity setting are illustrated in (d).
} 
\label{fig:seg_res}
\end{figure*}

\begin{figure*}[ht!]
\begin{center}
 \includegraphics[width=0.96\textwidth]{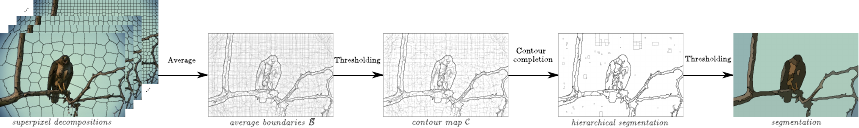}
\end{center}
\caption{
Segmentation from superpixel contour detection pipeline.
Superpixel decompositions are computed at several scales and their boundaries are averaged to produce a contour map.
The low confidence boundary pixels are removed with a thresholding.
From this contour map, a hierarchical segmentation can be computed and thresholded to provide an image segmentation,
that we illustrate with average colors.
} 
\label{fig:seg_pipeline}
\end{figure*}

To underline the impact of the regularity parameter on contour detection performances, 
the precision (P), boundary recall (BR), and corresponding maximum F-measure are 
computed for several regularity levels respectively in Figures \ref{fig:seg_res}(a), (b) and (c).
The regularity clearly impacts the performances, since the precision and boundary recall 
respectively increases and drops with the GR measure in Figures \ref{fig:seg_res}(a) and (b).
Consequently, a good trade-off for maximum F-measure is obtained for GR values between $0.35$ and $0.45$.
More importantly, results appear to be significantly impacted by the regularity, so most methods could perform better than with their default parameters, on this application, with appropriate settings.

Finally, a segmentation can be generated using
a contour completion step that compute closed regions from the contour map \cite{arbelaez2009}.
The whole contour detection and segmentation pipeline is illustrated in Figure \ref{fig:seg_pipeline}, and 
we show in Figure \ref{fig:seg_ex} contour detection and segmentation examples 
computed from regular and irregular decompositions with Ref. \citenum{achanta2012}.
The average of regular superpixel boundaries does not enable
to efficiently capture the ground truth contours (see Figure \ref{fig:seg_ex}(c)).
The contour closure of the image objects is not robust enough to
lead to a segmented region with Ref. \citenum{arbelaez2009}.
With irregular decompositions, the superpixel shapes are less constrained 
and can better adhere to the image contours.
The contour closure obtained by the average of boundaries is hence more robust and can
lead to accurate segmentation (see Figure \ref{fig:seg_ex}(b)).
Nevertheless, note that the thresholding enables to remove the boundary artifacts generated with irregular decompositions,
that can obtain lower performances in terms of respect of image objects when evaluated with ASA.

        \newcommand{\lastt}{0.195\textwidth}
\begin{figure}[h!]
\begin{center}
 \includegraphics[width=0.94\textwidth,height=1.18\textwidth]{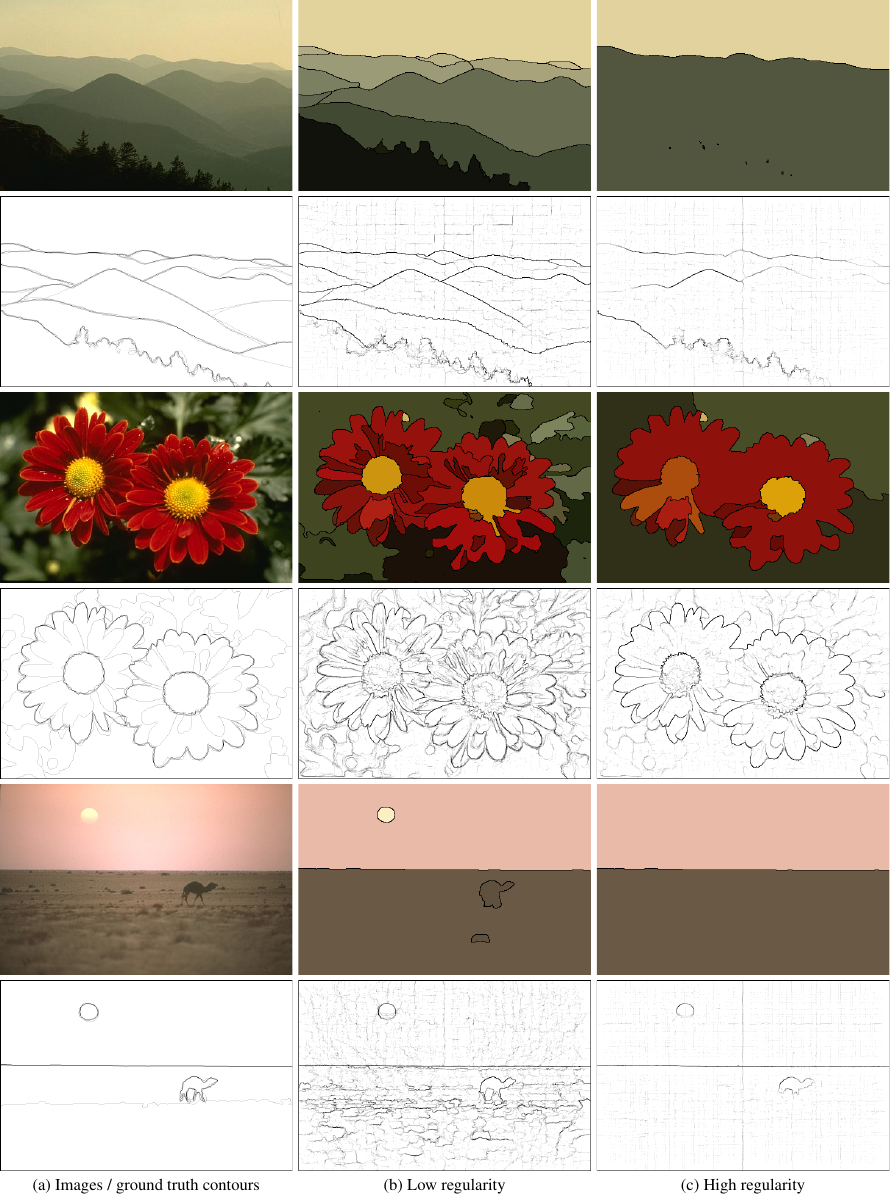}
\end{center}
\caption{
Examples of contour detection and segmentation
with low (b) and high (c) regularity settings. 
Initial images with the average ground truth contours are shown in (a). 
Regions with average colors are illustrated after a thresholding of the hierarchical segmentation
computed from the contour map with Ref. \citenum{arbelaez2009}.
The average GR is respectively $0.256$ and $0.481$  for the irregular and regular decompositions.
} 
\label{fig:seg_ex}
\end{figure}

\subsection{Correlation between GR and Performances\label{sec:pearson}}

In this section, 
we quantitatively show the correlation between the proposed global regularity measure GR 
and the performances of superpixel metrics and applications.
We report in Table \ref{table:pearson},
the Pearson correlation coefficient between all regularity measures: 
C \eqref{circu} \cite{schick2012}, SRC \eqref{src}, J \eqref{jaccard} \cite{machairas2015}, 
SMF \eqref{smf} and GR \eqref{gr},
and the performances of 
ASA \eqref{asa}, UE \eqref{ue_np}, BR \eqref{br}, P \eqref{fmeasure}, EV \eqref{ev} and MSE \eqref{mse}.
The correlations are averaged on the decompositions of 
SLIC \cite{achanta2012}, ERGC \cite{buyssens2014}, WP \cite{machairas2015}, LSC \cite{li2015}, ETPS \cite{yao2015} and SCALP \cite{giraud2017_scalp}
methods, which enable to set a regularity parameter,
and are computed for $K=250$ superpixels.

Logically the ASA, BR, P and EV measures are negatively correlated
while the UE and MSE measures are in line  with the regularity.
The proposed SRC metric globally improves the correlations
compared to the circularity metric C,
and the proposed SMF improves the ones of the J measure.
Overall, the proposed GR measure obtains the highest correlation with
the performances, demonstrating its relevance compared to previously introduced metrics.
Finally, note that the correlation between ASA and UE is in average $-0.9969$,
which further assess the assumption made in Section \ref{sec:rio}.

\begin{table}[t!]
\caption{
Comparison of Pearson correlation coefficient between performances of metrics and applications, and the
different regularity measures.
The average results are computed on the absolute value of the correlations.
}
\begin{center}
\renewcommand{\arraystretch}{1.3}
{\footnotesize
\begin{tabular}{@{\hspace{0mm}}lr@{\hspace{2mm}}r@{\hspace{2mm}}r@{\hspace{2mm}}r@{\hspace{2mm}}@{\hspace{2mm}}r@{\hspace{2mm}}@{\hspace{0mm}}}
\cline{2-6}
&\multicolumn{1}{c}{GR \eqref{gr}}&  \multicolumn{1}{c}{SMF \eqref{smf}}   & \multicolumn{1}{c}{J \eqref{jaccard}} &\multicolumn{1}{c}{SRC \eqref{src}}  & \multicolumn{1}{c}{C \eqref{circu}} \\ \hline
\multicolumn{1}{@{\hspace{2mm}}l@{\hspace{3mm}}}{ASA \eqref{asa}} 	
&$\mathbf{-0.5473
}$	&$-0.5250
$	&$-0.5266
$	&$-0.5350
$ 	&$-0.5318
$  \\ 
\multicolumn{1}{@{\hspace{2mm}}l@{\hspace{3mm}}}{UE \eqref{ue_np}}      
&$\mathbf{0.5506
}$	&$0.5284
$	&$0.5299
$	&$0.5384
$ 	&$0.5353
$  \\ 
\multicolumn{1}{@{\hspace{2mm}}l@{\hspace{3mm}}}{BR \eqref{br}}		
&$\mathbf{-0.9136
}$	&$-0.8974
$	&$-0.8972
$	&$-0.9049
$ 	&$-0.9034
$  \\ 
\multicolumn{1}{@{\hspace{2mm}}l@{\hspace{3mm}}}{P \eqref{fmeasure}}	
&${-0.9627
}$	&$-0.9645
$	&$-0.9656
$	&$-0.9688
$ 	&$\mathbf{-0.9712
}$  \\ 
\multicolumn{1}{@{\hspace{2mm}}l@{\hspace{3mm}}}{EV \eqref{ev}}		
&$\mathbf{-0.6641
}$	&$-0.6426
$	&$-0.6428
$	&$-0.6528
$ 	&$-0.6503
$  \\ 
\multicolumn{1}{@{\hspace{2mm}}l@{\hspace{3mm}}}{MSE \eqref{mse}}	
&$\mathbf{0.6760
}$	&$0.6552
$	&$0.6554
$	&$0.6655
$ 	&$0.6636
$  \\  \hline
\multicolumn{1}{@{\hspace{2mm}}l@{\hspace{3mm}}}{{Average}}		
&$\mathbf{0.8165
}$	&$0.8113
$	&$0.8076
$	&$0.8122
$ 	&$0.8085
$  \\  \hline
\end{tabular} 
}
\end{center}
\label{table:pearson}
\end{table}

\section{Discussion}

Since the superpixel representation has become popular with works such as Ref. \citenum{ren2003},
many works have proposed decomposition methods using various techniques.
In the addition to the high number of superpixel works, 
specific metrics dedicated to superpixel evaluation
were progressively introduced and non exhaustively reported in
the associated works.
More generally,
to evaluate the improvement over state-of-the-art of new proposed methods, 
it is crucial to have a consistent and rigorous evaluation of the reported results.
Otherwise, we cannot relevantly assess the potential breakthrough performances of a new method.
These issues of
evaluation metrics 
finds an echo in other image processing applications.
For instance, in most of the denoising literature, the SSIM is reported along with
the PSNR metric to relevantly compare the performances, since they are complementary.

In the superpixel context, 
most recent methods only require to set the number of generated superpixels and
to tune a regularity parameter.
By reporting results evaluated according to both aspects, on the advocated metrics, 
we ensure that the comparison is not biased by the set parameter,
and it more globally expresses the range of possible decompositions that can be obtained.
Moreover, the main aim of superpixels is to be used as a  pre-processing to speed 
up the computational time of a given application.
The number of superpixels corresponds to the number of elements to process, and is generally in line
with the complexity of the application.
Therefore, the number of desired superpixels is set by the time constraint, 
and the main parameter to study is the regularity.

The evaluation of performances according to the GR measure
offers a new and relevant interpretation of the potential of superpixel methods,
since it is directly correlated
with the results of image processing applications.
We quantitatively demonstrate in Section \ref{sec:pearson}, that our measure 
is more correlated to the application performances than previously introduced regularity metrics.
In Section \ref{sec:impact_regu}, we show that
regular decompositions may  better perform for applications such as nearest neighbor matching or tracking. 
The adjacency relations between regular superpixels are more consistent between images,
which enables
to robustly describe superpixels and their neighborhood to match similar patterns \cite{giraud2017spm}.
When processing the image itself, with applications such as image compression or contour detection,
more irregular decompositions may be desirable.
By relaxing the regularity constraint, the superpixels group more homogeneous pixels in terms of color,
which lead to more accurate compression,
and more efficiently follows the image contours, 
leading to higher contour detection.
Consequently, according to the tackled application, we demonstrate that 
the regularity must be controlled to provide the best possible results.

As shown throughout the paper, and illustrated in Figure \ref{fig:slic_ex},
the same method can produce decompositions of various regularity,
and perform well on several applications.
Therefore, although many papers make a clear distinction 
between regular and irregular methods, generally according to the
results produced with their default parameter, 
it appears that this property does not reflect the potential behavior of a method.
The distinction between regular and irregular methods should
be based on the method ability to produce superpixels of different sizes, 
instead of irregular shapes.
For instance, 
methods such as Ref. \citenum{felzenszwalb2004,rubio2016} can produce regions of very different sizes, while
 SLIC \cite{achanta2012}, and most methods that are based on,
can produce irregular shaped superpixels by relaxing the regularity constraint, 
but the clustering is still constrained in a square bounding box,
so the produced superpixels have consistent sizes.

\section{Conclusion}

In this work, we propose an evaluation framework of
superpixel decomposition methods.
We take a global view of existing superpixel metrics, and
investigate their limitations to propose a relevant comparison process.
We reduce the evaluation to
the three main aspects of a decomposition: 
the homogeneity of color clustering with EV, the respect of the image objects with ASA, 
and the regularity with the proposed GR.
Contrary to existing regularity metrics, the proposed GR
measure relevantly evaluates both shape regularity and consistency of the superpixels, 
addressing the non-robustness of state-of-the-art metrics.

Since most of the recent methods enable to set a shape parameter,
which impacts the decomposition performances,
we also compare methods according to their regularity, evaluated with the GR measure.
Hence, the reported performances are not biased by the methods settings.
The GR metric is highly correlated to the performances of 
applications such as matching, tracking or contour detection,
where the same method can perform well on several applications, 
according to the chosen regularity parameter.
Therefore, we advocate to use the proposed framework and metrics
when comparing superpixel or supervoxel methods, and
to evaluate the performances at several regularity levels evaluated with GR.

\appendix

\section{Correlation between ASA and UE\label{sec:asa_demo}}

In this section, we demonstrate the relation \eqref{asa_ue}
for several assumptions.

\begin{proposition} 
If all superpixels $S_k$ have a major overlap, \textit{i.e.}, 
if $\forall S_k, \exists  G_i, |S_k\cap G_i| \geq |S_k|/2, \text{ then,}$
$\text{ASA}(\SSS,\GG) = 1 - \text{UE}(\SSS,\GG)/2$.
\end{proposition}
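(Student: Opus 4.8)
The plan is to prove the identity one superpixel at a time and then sum. First I would fix a single superpixel $S_k$ and abbreviate $a_j := |S_k \cap G_j|$. Since the ground truth $\GG$ partitions the image, every pixel of $S_k$ lies in exactly one region $G_j$, so $\sum_j a_j = |S_k|$ and $|S_k \setminus G_j| = |S_k| - a_j$. In this notation the contribution of $S_k$ to $\text{ASA}$ is $\max_j a_j$, while its contribution to $\text{UE}$ is $\sum_j \min\{a_j,\,|S_k| - a_j\}$. I would state the partition properties of both $\SSS$ and $\GG$ once at the outset, since they are used repeatedly.

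Next I would exploit the major overlap hypothesis, which supplies an index $i$ with $a_i \geq |S_k|/2$. The key observation is that this single inequality controls all the others: because $\sum_{j\neq i} a_j = |S_k| - a_i \leq |S_k|/2$ and the $a_j$ are nonnegative, every $a_j$ with $j\neq i$ satisfies $a_j \leq |S_k|/2 \leq |S_k| - a_j$. Consequently each minimum resolves explicitly: for $j=i$ it equals $|S_k| - a_i$, and for $j\neq i$ it equals $a_j$. Identifying which branch of each $\min$ is active is the heart of the argument and the only place the hypothesis is used; I expect this case split to be the main (and only real) obstacle.

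Summing the resolved minima gives the $\text{UE}$ contribution of $S_k$ as $(|S_k| - a_i) + \sum_{j\neq i} a_j = 2(|S_k| - a_i)$, where I again use $\sum_{j\neq i} a_j = |S_k| - a_i$. At the same time $a_i = \max_j a_j$, since $a_i \geq |S_k|/2 \geq a_j$ for all $j\neq i$ (ties at $|S_k|/2$ cause no trouble, as both branches of the $\min$ then agree), so the $\text{ASA}$ contribution of $S_k$ is exactly $a_i$. Thus, per superpixel, the $\text{UE}$ term is twice the gap $|S_k| - (\text{ASA contribution})$.

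Finally I would sum over all $S_k$ and normalize by $|I|$, using that the superpixels also partition the image so that $\sum_{S_k}|S_k| = |I|$:
\begin{equation}
\text{UE}(\SSS,\GG) = \frac{2}{|I|}\left(\sum_{S_k}|S_k| - \sum_{S_k}\max_j a_j\right) = 2\bigl(1 - \text{ASA}(\SSS,\GG)\bigr),
\end{equation}
which rearranges to the claimed $\text{ASA}(\SSS,\GG) = 1 - \text{UE}(\SSS,\GG)/2$.
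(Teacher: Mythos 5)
Your proof is correct and follows essentially the same route as the paper: resolve each $\min\{|S_k\cap G_j|,|S_k\setminus G_j|\}$ using the major-overlap region, sum to get $2(|S_k|-\max_j|S_k\cap G_j|)$ per superpixel, then normalize. You are in fact slightly more careful than the paper, since you explicitly justify why $a_j\leq|S_k|-a_j$ for $j\neq i$ and why $a_i$ realizes the maximum, two points the paper's proof asserts without comment.
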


\begin{proof}
\noindent The major overlap hypothesis, illustrated in Figure \ref{fig:cond_ue_asa}(b),
is that each superpixel $S_k$ has a major overlap with a region $G_i$
that covers more than half of its area, \emph{i.e.}, 
\begin{equation}
 \forall S_k, \exists  G_i, |S_k\cap G_i| \geq |S_k|/2. \label{cond_ue_asa}
\end{equation} 
\noindent Hence, we have $\min\{|S_k\cap G_i|,|S_k\backslash G_i|\} = |S_k\backslash G_i|$, and
$\forall G_{j,j\neq k}$,  
$\min\{|S_k\cap G_j|,|S_k\backslash G_j|\} = |S_k\cap G_j|$. 

\noindent Therefore, we obtain,
\begin{align*}
\sum_{G_j}\min\{|S_k\cap G_j|,|S_k\backslash G_j|\} 
    &= |S_k\backslash G_i| + \sum_{G_j, j\neq k}|S_k\cap G_j| ,  \\     
    &= (|S_k| - |S_k\cap G_i|) + (|S_k| - |S_k\cap G_i|) ,\\
    & = 2(|S_k| - |S_k\cap G_i|),\\
    &= 2(|S_k| - \max_{G_j}|S_k\cap G_j|).
\end{align*}
\noindent Consequently, we have,
\begin{align*}
\text{UE}(\SSS,\GG) &=  \frac{1}{|I|}\sum_{S_k}\sum_{G_j}\min\{|S_k\cap G_j|,|S_k\backslash G_j|\} , \\   
    &= \frac{2}{|I|}\sum_{S_k}\left(|S_k| - \max_{G_j}|S_k\cap G_j|\right) ,\\
    &= 2\left(1 - \frac{1}{|I|}\sum_{S_k}\max_{G_j}|S_k\cap G_j|\right) ,\\
    &= 2(1-\text{ASA}(\SSS,\GG)) .
\end{align*}
\noindent Finally, according to the ASA definition \eqref{asa}, the relation \eqref{asa_ue} is verified.
\end{proof}

 \begin{proposition} 
 If all superpixels $S_k$ overlap with at most $2$ regions, \textit{i.e.}, 
 if $\forall S_k, |I_k|\leq$ $2$, with $\sum_{i\in I_k}|S_k\cap G_i|$ $=$ $|S_k|, 
 \text{ then, } \text{ASA}(\SSS,\GG) = 1 - \text{UE}(\SSS,\GG)/2$.
\end{proposition}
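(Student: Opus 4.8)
The plan is to deduce this proposition directly from the preceding one, rather than redo the full computation: I would check that the present hypothesis is simply a special case of the major overlap hypothesis \eqref{cond_ue_asa}, and then invoke the earlier result. First I would fix an arbitrary superpixel $S_k$. By assumption its index set $I_k$ of overlapping ground truth regions satisfies $|I_k|\leq 2$, and the overlaps exhaust the superpixel, i.e., $\sum_{i\in I_k}|S_k\cap G_i| = |S_k|$, which just reflects the fact that $\GG$ partitions the image so that every pixel of $S_k$ lies in some $G_i$.

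The key step is a short pigeonhole argument on these at most two pieces. If $|I_k| = 1$, say $I_k = \{a\}$, then $|S_k\cap G_a| = |S_k| \geq |S_k|/2$ trivially. If $|I_k| = 2$, say $I_k = \{a,b\}$, then $|S_k\cap G_a| + |S_k\cap G_b| = |S_k|$ forces $\max(|S_k\cap G_a|,|S_k\cap G_b|) \geq |S_k|/2$, since two nonnegative numbers summing to $|S_k|$ cannot both be strictly below $|S_k|/2$. In either case there exists a region $G_i$ with $|S_k\cap G_i| \geq |S_k|/2$, which is exactly the defining inequality of \eqref{cond_ue_asa} for this superpixel.

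Since $S_k$ was arbitrary, condition \eqref{cond_ue_asa} holds for every superpixel of $\SSS$, so the hypothesis of the major overlap proposition is met and its conclusion applies verbatim, giving $\text{ASA}(\SSS,\GG) = 1 - \text{UE}(\SSS,\GG)/2$ with no further calculation. I do not expect a genuine obstacle here; the only points that merit a line of care are the degenerate case $|I_k| = 1$ and the remark that the exhaustion condition $\sum_{i\in I_k}|S_k\cap G_i| = |S_k|$ is precisely what guarantees the two recorded intersections account for all of $S_k$, leaving no unrecorded mass that could break the pigeonhole bound.
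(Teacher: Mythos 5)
Your proposal is correct and follows essentially the same route as the paper: both reduce the statement to the major overlap hypothesis \eqref{cond_ue_asa} by a case split on whether $S_k$ meets one or two regions, using the pigeonhole observation that two nonnegative overlaps summing to $|S_k|$ cannot both be below $|S_k|/2$, and then invoke the first proposition. Your write-up merely spells out this pigeonhole step slightly more explicitly than the paper does.
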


\begin{proof}
In this case, a superpixel $S_k$ can overlap with one or two regions. 
\noindent If $S_k$ overlaps with only one region, \emph{e.g.}, $G_i$, then the assumption in
 \eqref{cond_ue_asa} is verified.
Otherwise, if $S_k$ overlaps with two regions, \eqref{cond_ue_asa} is also
necessarily true, since one of the two regions 
overlaps with at least half of $S_k$.
Hence, in both cases,
\eqref{asa_ue} is verified.
\end{proof}

\begin{proposition} 
If $\GG$ is a binary ground truth, \emph{i.e.}, $|\GG|$ $\leq$ $2$, 
then, $\text{ASA}(\SSS,\GG) = 1 - \text{UE}(\SSS,\GG)/2$.
\end{proposition}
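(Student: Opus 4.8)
The plan is to observe that this statement is an immediate corollary of the preceding proposition, the one assuming that every superpixel overlaps with at most two ground truth regions. The key remark is that a binary ground truth has at most two regions \emph{in total}, so there are simply no more than two regions available for any superpixel to meet. First I would note that if $|\GG|\leq 2$, then for every superpixel $S_k$ we trivially have $|I_k|\leq 2$, where $I_k$ indexes the regions $G_i$ with $|S_k\cap G_i|>0$, since $I_k\subseteq\{1,\dots,|\GG|\}$. This is exactly the hypothesis of the second proposition of this appendix, and that proposition has already been shown to imply the major overlap condition \eqref{cond_ue_asa}, hence the relation \eqref{asa_ue}. So the proof reduces to citing the already-established implication.

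If one prefers a self-contained argument rather than a bare reduction, I would instead verify the major overlap hypothesis \eqref{cond_ue_asa} directly. With $|\GG|\leq 2$, write $\GG=\{G_1,G_2\}$ (the case $|\GG|=1$ is immediate, since then $S_k\cap G_1 = S_k$). Each superpixel satisfies $|S_k\cap G_1| + |S_k\cap G_2| = |S_k|$ because the two regions partition the image. Consequently $\max\{|S_k\cap G_1|,|S_k\cap G_2|\}\geq |S_k|/2$, which is precisely \eqref{cond_ue_asa}. Invoking the first proposition of this appendix then yields $\text{ASA}(\SSS,\GG)=1-\text{UE}(\SSS,\GG)/2$.

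There is essentially no obstacle here: the only content is recognizing the logical containment of hypotheses, namely that a global bound on the number of ground truth regions forces the per-superpixel overlap bound, which in turn forces the major overlap condition. The computational work has already been carried out once and for all in the proof of the first proposition, so I would keep this proof to a single short paragraph and avoid repeating the algebraic expansion of $\text{UE}$ and $\text{ASA}$.
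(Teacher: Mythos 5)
Your proposal is correct and follows the same route as the paper: reduce to the preceding proposition by observing that a binary ground truth leaves at most two regions for any superpixel to overlap. The optional self-contained verification of the major overlap condition is a harmless elaboration of what the earlier propositions already establish.
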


\begin{proof}
 If $\GG$ is a binary ground truth, it is composed of only two regions
such that $\GG = G_1 \cup G_2$, and all superpixels necessarily
overlap with at most $2$ regions.
The assumption of Proposition 2 is hence true and \eqref{asa_ue} is verified.
\end{proof}

 Figure \ref{fig:ue_asa} demonstrates the relevance of relation \eqref{asa_ue}.
 Very low differences are reported between this model and measured ASA and UE for 
 decompositions of the following state-of-the-art methods: 
 TP \cite{levinshtein2009}, 	ERS \cite{liu2011}, 		SLIC \cite{achanta2012},   SEEDS \cite{vandenbergh2012}, 
ERGC \cite{buyssens2014}, 	WP \cite{machairas2015}, 	LSC \cite{li2015}, 		ETPS \cite{yao2015}  and 	SCALP \cite{giraud2017_scalp},
 applied to the test images of the 
 Berkeley segmentation dataset (BSD) \cite{martin2001}.
 Note that the error amplitude logically reduces with the number of superpixels, \emph{i.e.}, 
 the accuracy of the decomposition.

\begin{figure}[ht!]
\begin{center}
\includegraphics[height=0.38\textwidth,width=0.60\textwidth]{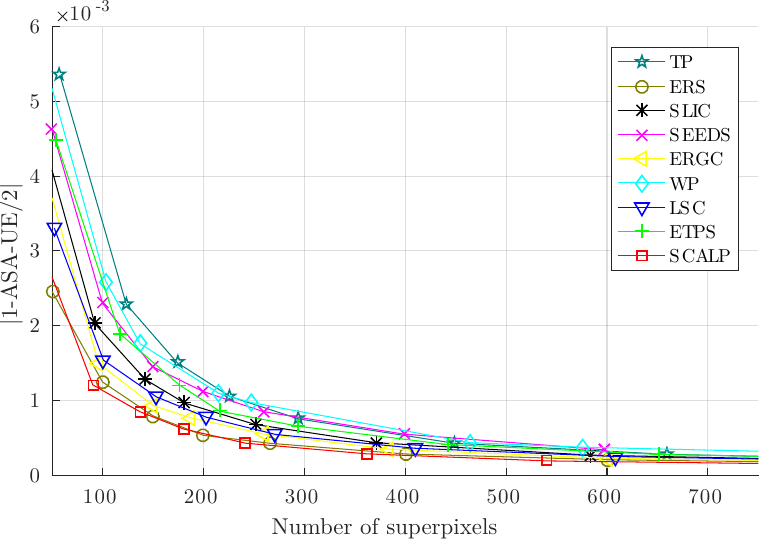}
\end{center}
\caption{
Error between measured ASA and UE and the relation  \eqref{asa_ue}.
The low values compared to ASA and UE metrics demonstrate the relevance of our hypothesis.
} 
\label{fig:ue_asa}
\end{figure}

\section*{Acknowledgments}
This study has been carried out with financial support from the French 
State, managed by the French National Research Agency (ANR) in the  
frame of the GOTMI project (ANR-16-CE33-0010-01) and the
Investments for the future Program IdEx Bordeaux 
(ANR-10-IDEX-03-02) with the Cluster of excellence CPU.

\bibliographystyle{spiejour}
\bibliography{JEI}

\vspace{0.3cm}\noindent \textbf{Rémi Giraud} received the M.Sc. in telecommunications at ENSEIRB-MATMECA School of Engineers, 
and the M.Sc. in signal and image processing from the University of Bordeaux, France, in 2014.
Since, he is pursuing his Ph.D. at Laboratoire  Bordelais  de  Recherche   en
Informatique in the field of image processing.
His research areas mainly include computer vision and image processing applications
with non-local methods and superpixel representation,
applied to natural and medical images.
\\

\noindent \textbf{Vinh-Thong Ta} received the M.Sc. and Doctoral degrees in computer science from 
the University of Caen Basse-Normandie, France, in 2004
and 2009, respectively. From 2009 to 2010, he was an Assistant Professor in computer science
with the School of Engineers of Caen, France.
Since 2010, he is an Associate Professor with the Computer Science Department, School of Engineers
ENSEIRB-MATMECA. His research mainly concerns image and
data processing. \\

\noindent \textbf{Nicolas Papadakis} received the Graduate degree in
applied mathematics from the National Institute of Applied Sciences, Rouen, France, in 2004, and the 
Ph.D. degree in applied mathematics from the University of Rennes, France, in 2007. 
He is currently a  Researcher from the Centre National de la Recherche Scientifique, Institut de Mathématiques
de Bordeaux, France. His main research interests include tracking, motion estimation, and optimal
transportation for image processing and data assimilation problems.

\listoffigures
\listoftables

\end{spacing}
\end{document}